\newcommand{\bigO}{\mathcal{O}}
\newtheorem{thm}{Theorem}[section]
\newtheorem{lem}[thm]{Lemma}
\newtheorem{cor}[thm]{Corollary}
\def\BibTeX{{\rm B\kern-.05em{\sc i\kern-.025em b}\kern-.08em
    T\kern-.1667em\lower.7ex\hbox{E}\kern-.125emX}}
\begin{document}

\title{Analysis of Lower Bounds for Simple Policy Iteration\\
\thanks{1. Department of Electrical Engineering, IIT Bombay, India. \{sconsul, bhishma, kumar.ashutosh, parthasarathi.k\}@iitb.ac.in}
\thanks{\# Equal Contribution.}
\thanks{Guided by Prof. Shivaram Kalyanakrishnan}
\thanks{This work was undertaken as a part of a course project in CS 747,
  Autumn 2019, at IIT Bombay. Another team (N. Joshi, S. Shah, N. Jain, S. Balasubramanian)
  independently came up with a similar lower bound and analysis}
}

\author{
\IEEEauthorblockN{Sarthak Consul$^{1, \#}$}
\and 
\IEEEauthorblockN{Bhishma Dedhia$^{1, \#}$}
\and 
\IEEEauthorblockN{Kumar Ashutosh$^{1, \#}$}
\and 
\IEEEauthorblockN{Parthasarathi Khirwadkar$^{1, \#}$}
}
\maketitle

\begin{abstract}
We prove a strong exponential lower bound of $\bigO\big((3+k)2^{N/2-3}\big)$ on simple policy iteration for a family of $N$-state $k$-action MDP.
\end{abstract}

\begin{IEEEkeywords}
lower bound, policy iteration
\end{IEEEkeywords}

\section{\textbf{Introduction}}

Policy iteration is a family of algorithms that are used to find an optimal policy for a given Markov Decision Problem (MDP). Simple Policy iteration (SPI) is a type of policy iteration where the strategy is to change the policy at exactly one improvable state at every step. Melekopoglou  and Condon [1990] \cite{Melekopoglou1994OnTC} showed an exponential lower bound on the number of iterations taken by SPI for a 2 action MDP. The results have not been generalized to $k-$action MDP since. 

In this paper we revisit the algorithm and the analysis done in \cite{Melekopoglou1994OnTC}. We generalize the previous result and prove a novel exponential lower bound on the number of iterations taken by policy iteration for $N-$state, $k-$action MDPs. We construct a family of MDPs and give an index-based switching rule that yields a strong lower bound of $\bigO\big((3+k)2^{N/2-3}\big)$. In section 2 we describe the relevant background and in section 3 we present the important notations. In section 4 we show the MDP construction and in section 5 we describe the index based switching rule. This is followed by various experiments in section 6 and a proof in section 7. 

\section{\textbf{Background}}
A Markov Decision Process (MDP) \cite{Bellman:1957},\cite{Puterman:1994} represents the environment for a sequential decision making problem. 
An MDP is defined by the tuple $(S,A,T,R, \gamma)$. $S$ is the set of states and $A$ is the set of actions. $|S|$ is denoted as n and $|A|$ is denoted as k. $T:S\times A \times S \rightarrow [0,1]$ is a function which gives the transition probability. Specifically $T(s,a,s')= $ Probability of reaching state $s'$ from state $s$ by taking action $a$. $R:S\times A \rightarrow \mathbb{R}$ is the reward function. $R(s,a)$ is the expected reward that the agent will get by taking action $a$ from state $s$. $\gamma$ is the discount factor, which indicates the importance given to future expected reward. 
Policy $\pi:S\times A \rightarrow \mathbb{R}$ is defined as the probability that agent will choose action $a$ from state $s$. If the policy is deterministic, then $\pi(s)$ is the action that the agent takes when it is on state $s$. For a given policy $\pi(s)$, we define the value function $V^\pi(s):S \rightarrow \mathbb{R}$ as the total expected reward that the agent receives by following the policy from state $s$. The state-action value function $Q^\pi(s,a):S\times A \rightarrow \mathbb{R} $ for a policy $\pi(s)$ is defined as the total expected reward that the agent receives if it takes action $a$ from state $s$, and then follows the policy $\pi$.

\subsection{\textbf{Policy Iteration}}
Policy Iteration (PI) is an iterative algorithm that is used to obtain the optimal policy of a MDP, Let $(S,A,P,R,\gamma)$ describe a MDP and let $\Pi$ be the set of all policies. It has been proved that there exists an optimal policy $\pi^{*}$ such that $\forall \pi \in \Pi, s \in S, V^{\pi^{*}}(s)\geq V^{\pi}(s) $. PI consists of two fundamental steps performed sequentially in every iteration:\\
\textit{Policy Evaluation:} This step is used to evaluate the state values of the MDP under a particular policy. Given a deterministic policy $\pi$ that maps $S \to A$, the state values satisfy the following relation $\forall s \in S$:
\begin{equation*}
    V^{\pi}(s) = \Sigma_{s' \in S} T(s,\pi(s),s')(R(s,\pi(s),s')+\gamma V^{\pi}(s'))
\end{equation*}
The state values can be computed by solving the system of linear equations.
\\
\textit{Policy Improvement:} The state-action value function can be found using:
\begin{equation*}
    Q^{\pi}(s,a) =  \Sigma_{s' \in S} T(s,a,s')(R(s,a,s')+\gamma V^{\pi}(s'))
\end{equation*}
A state is defined as improvable state under a policy $\pi$ if $\exists a \in A \text{ such that }Q^{\pi}(s,a) > Q^{\pi}(s,\pi(s))$. One or more improvable states are switched to an improvable action under the policy improvement step and the resultant policy can be denoted as $\pi^{'}$. There can be many choices for the "locally improving" policies $\pi^{'}$ and different PI variants follow different switching strategies. 

\section{\textbf{Notation}}

Starting from time $t=0$, policy iteration would update the policy to $\pi^t(s)$. The value function and Q-value corresponding to this $\pi^t(s)$ is denoted as $V^t(s)$ and $Q^t(s,a)$ respectively.

The policy $\pi^t(s)$, for the n-states for any general t is expressed as $S_n S_{n-1}\ldots S_1$.

\section{\textbf{MDP Construction}}
In this section we formulate a method of constructing a family $\mathcal{F}$ of MDPs, that give the lower bounds for the switching procedure. Our MDP construction builds over the formulation given by Melekopoglou  and Condon [1990]. For a MDP having $N$ states, $k$ ($\geq 2$) actions the MDP graph is as follows:
\begin{itemize}
    \item The graph has $n$ ($1, \ldots ,n$) ``state" vertices
    \item  The graph has $n$ ($1', \ldots ,n'$) ``average" vertices
    \item The graph has 2 sink (terminal) vertices with sink value $(\alpha,\beta)$ =$ (-1,0)$. 
\end{itemize}
where $n=N/2-1$.\\
The transitions for actions ($0 \ldots k-1$) on the MDP graph are constructed as follows:
\begin{itemize}
    \item Every action taken on an average vertex $s'$ results into an equally likely transition to the state vertex $s-2$ and average vertex $(s-1)'$ 
    \item Action 0 on a state vertex $s$ results into a deterministic transition to the state vertex $s-1$
    \item Action 1 on a state vertex $s$ results into a deterministic transition to the average vertex $s'$
    \item In a MDP having $k>$2: An action  $\mathcal{A}\in \{2,3, \ldots, k-1\}$ on the state vertex $n$ results into a deterministic transition to the average vertex $n'$.
    \item  In a MDP having $k>$3: The action  $k-1$  on a state vertex $x$ $\in (1, \cdots ,n-1)$ results into a deterministic transition to the average vertex $(x+1)'$.
    \item In a MDP having $k>$3: An action  $\mathcal{A}\in \{2,3, \ldots, k-2\}$ on a state vertex $x \in (1, \ldots ,n-1)$ results in a stochastic transition to the average vertex $(x+1)'$ with a probability
    $p_{\mathcal{A}}\in (0,1)$ and to the average vertex $x'$ with a probability $1-p_{\mathcal{A}} = q_{\mathcal{A}}$. An increasing order is maintained over the transition probabilities, that is $p_{\mathcal{A}} > p_{\mathcal{A}-1}$.\\

\end{itemize}

Every transition into the sink states gives a reward equal to the sink value. Every other transition gives a reward of 0. The MDP is undiscounted and $\gamma$ is set to 1. Note that setting $k$ equal to 2 gives the family of MDPs described by Melekopoglou  and Condon \cite{Melekopoglou1994OnTC}. We shall denote the $n$ states, $k$ actions MDP belonging to this family as $\mathcal{F}(n,k)$ henceforth.

Clearly, PI will never update the policy at the average vertices as due to the equivalence of all the actions for the average vertices and so their policy will always be their initial policy. Thus for all subsequent analysis, the only the policy of the state vertices $S_n S_{n-1}\ldots S_1$ are considered.

Note that the optimal policy for this MDP is ($S_n S_{n-1}\ldots S_1 =00 \ldots 01$). 

Figures~\ref{fig:mdp_2_3}, \ref{fig:mdp_3_3} and \ref{fig:mdp_3_5} show the MDP graph for $(n=2,k=3)$, $(n=3,k=3)$ and $(n=3,k=5)$ respectively. 
\begin{figure*}[htbp]
\centering
    \includegraphics[scale = 0.24]{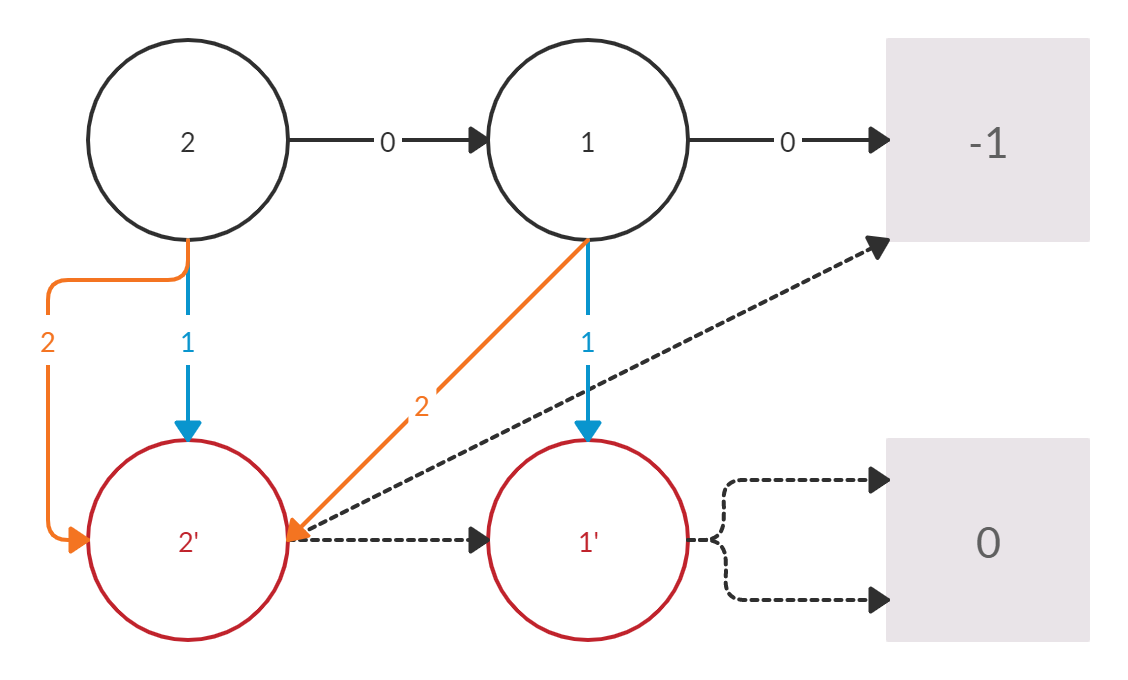}
    \caption{$\mathcal{F}(2,3)$, Sinks are square vertices (Best viewed in color)}
    \label{fig:mdp_2_3}
\end{figure*}

\begin{figure*}[htbp]
    \centering
    \includegraphics[scale = 0.3]{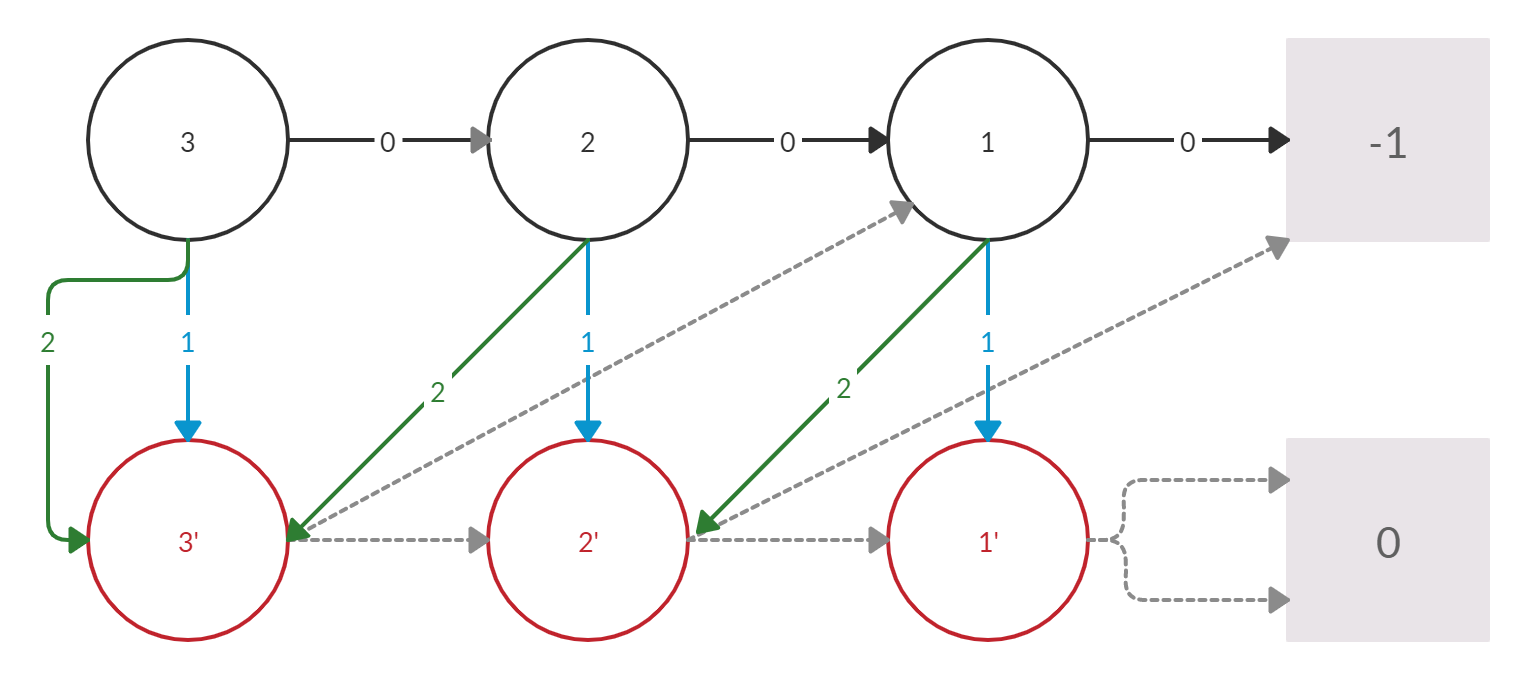}
    \caption{$\mathcal{F}(3,3)$ (Best viewed in color)}
    \label{fig:mdp_3_3}
\end{figure*}

\begin{figure*}[htbp]
    \centering
    \includegraphics[scale = 0.3]{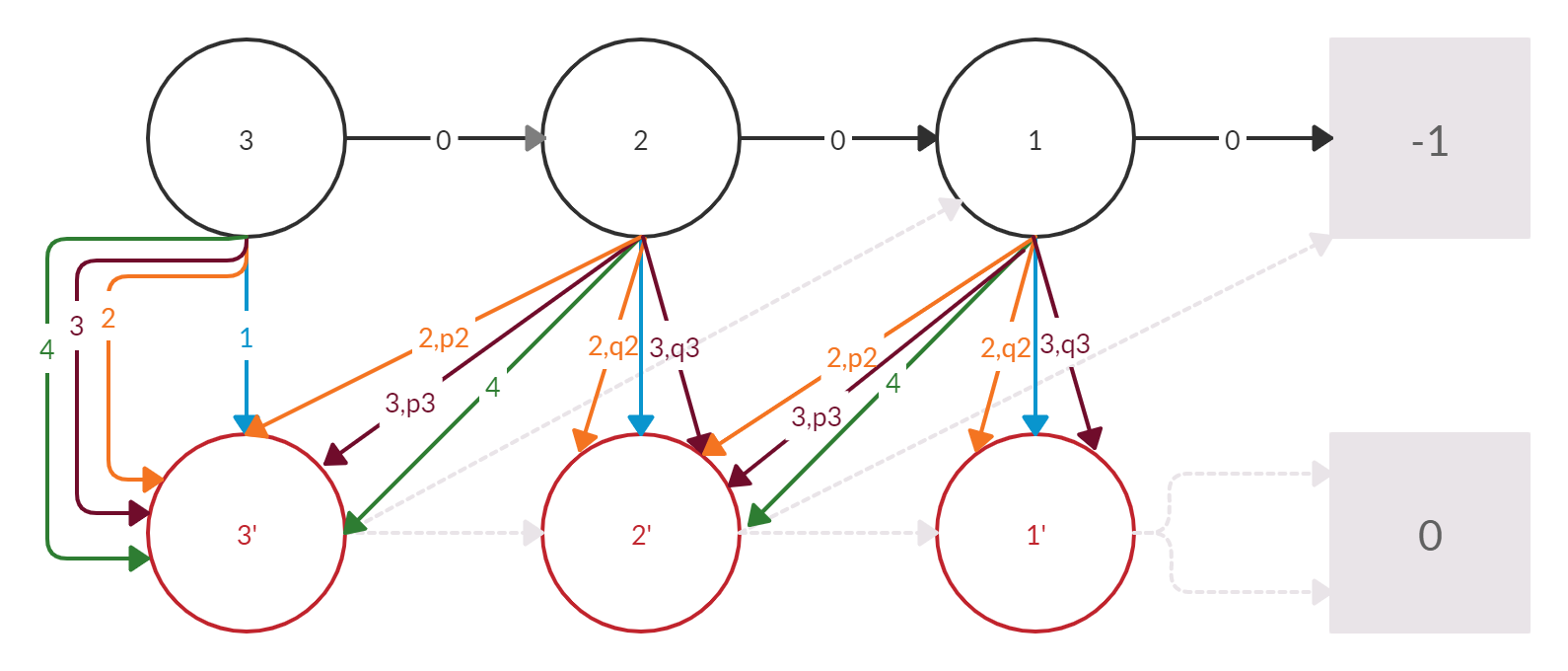}
    \caption{$\mathcal{F}(3,5)$ (Best viewed in color)}
    \label{fig:mdp_3_5}
\end{figure*}

\section{\textbf{Simple Policy Iteration}}
In Simple Policy Iteration (SPI), the policy of an arbitrary improvable state is switched to an arbitrary improving action. Specifically, the improvable state with the highest index is selected and its policy is switched to the improvable action with the highest index.

We denote the number of iterations taken by SPI for the a $n$-state, $k$-action MDP from the above family with an initial policy of $S_n S_{n-1}\ldots S_2 S_1 = 00\ldots00$ to converge to the optimum policy as $\mathcal{N}(n,k)$. We shall experimentally show and later prove that:
\begin{equation}
 \mathcal{N}(n,k) = (3+k)2^{n-2} -2   
\end{equation}
\section{\textbf{Experiments}}

Figure~\ref{fig:plot_nk} shows a plot of the number of iterations against the number of states and actions. Table~\ref{tab:nk} in the appendix contains number of iterations for for all $n,k$ pairs upto $n=k=10$.

\begin{figure}[H]
\includegraphics[width=\linewidth]{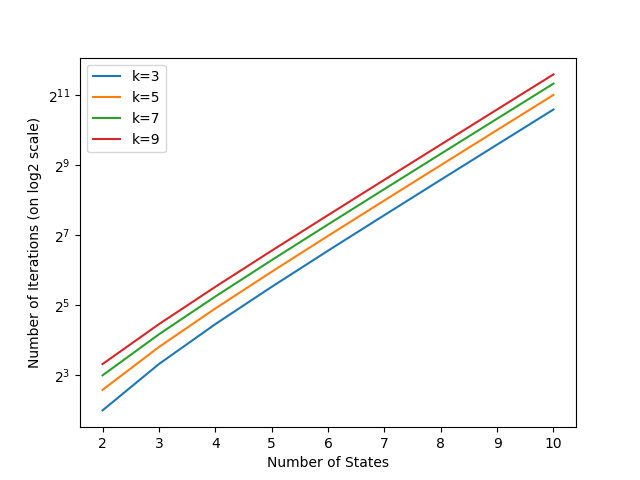}
\includegraphics[width=\linewidth]{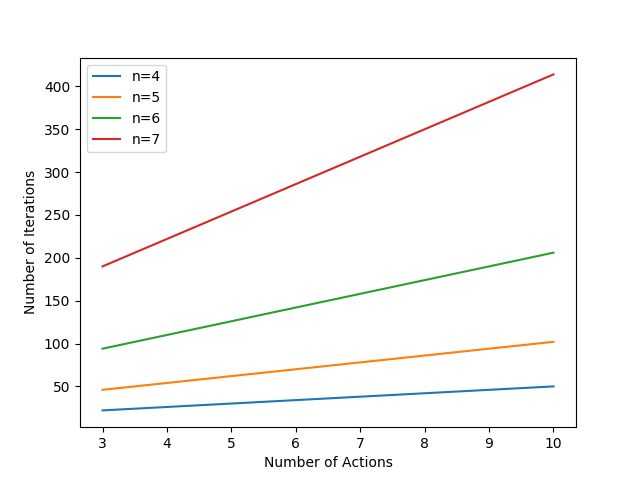}
\caption{(Top) Variation of number of iterations with number of states for fixed number of actions. The vertical scale is in logarithm with base 2. The line is almost linear which justifies the exponential dependence with number of states. (Bottom) Variation of number of iterations with number of action for fixed k. This dependence is linear with an increasing slope.}
\label{fig:plot_nk}
\end{figure}

We next describe how the switching happens for the MDP graph $(n=2,k=3)$ shown in Figure 1. Initially both vertices 1 and 2 are switchable because:\\
\begin{align*}
   Q(2,1) = Q(2,2) = -\frac{1}{2} > Q(2,0) = -1\\
Q(1,1) = 0 > Q(1,2) = -\frac{1}{2} > Q(1,0) = -1 
\end{align*}

According to the switching rule, state 2 switches to action 1. At this point only state 1 is improvable. So state 1 switches to the highest indexed improvable action which is 2. After this switch state 1 is again the only improvable state and it switches to action 1. This switch causes state 1 to attain its best possible value (0) and also makes state 2 improvable because:
$$Q(2,0) > Q(2,2) = Q(2,1) -\frac{1}{2}$$
Hence state 2 switches to action 0 and SPI converges to the optimal policy. The switching has been shown in the table below. 
\begin{table}[h]
\centering
 \caption{Switching sequence for $n=2,k=3$}
\begin{tabular}{ |c|c|c c| } 
 \hline
  t & $\pi(2) \quad \pi(1)$ &  $ V^t(2) $ & $ V^t(1)$ \\ 
 \hline
0& $0\quad 0 $ & $-1 $ & $-1 $\\[0.1cm]
1& $2\quad 0 $ & $-\frac{1}{2} $&$ -1 $\\[0.1cm]
2& $2\quad 2 $ & $-\frac{1}{2} $&$ -\frac{1}{2} $\\[0.1cm]
3& $2\quad 1 $ & $-\frac{1}{2} $&$ 0 $\\[0.1cm]
4& $0\quad 1 $ & $ 0$ &$ 0 $\\[0.1cm]
\hline
\end{tabular}
    \label{tab:switch_seq_n2}
\end{table}


\section{\textbf{Proof}}

The proof of the recursive relation requires the construction of a complementary family of MDPs, $\mathcal{F}_C$, which have the same structure and transition probabilities as $\mathcal{F}$ but sink values of the opposite relative order. We shall denote the $n$ states, $k$ actions MDP belonging to this complementary family as $\mathcal{F}_C(n,k)$ henceforth. By Corollary~\ref{lem:sink_val}, the complementary MDP is set to have sink rewards of (1,0). Note that the optimal policy for $\mathcal{F}_C(n,k)$ is $S_n S_{n-1}\ldots S_2 S_1 = 00\ldots00$. We denote the number of iterations taken by SPI for a $n$-state, $k$-action complementary MDP beginning with a policy of $S_n S_{n-1}\ldots S_2 S_1 = 00\ldots01$ as $\mathcal{N}_C(n,k)$. 

\begin{lem}
\label{lem:sink_invariant}
Policy iteration for the n-state, k-action MDP from family $\mathcal{F}$ and $\mathcal{F}_C$ is invariant to the actual value of sink values, and only depends on their relative values
\end{lem}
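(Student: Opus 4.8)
The plan is to exploit the fact that the MDP is undiscounted ($\gamma = 1$) and that every reward is collected only upon absorption into one of the two sinks. Under any deterministic policy $\pi$ the induced Markov chain is absorbing, so starting from any vertex $s$ the process reaches sink $\alpha$ or sink $\beta$ with total probability one. Writing $p_\alpha^\pi(s)$ and $p_\beta^\pi(s) = 1 - p_\alpha^\pi(s)$ for the absorption probabilities, the value function is simply the expected sink value,
\begin{equation*}
V^\pi(s) = p_\alpha^\pi(s)\,\alpha + p_\beta^\pi(s)\,\beta = \beta + p_\alpha^\pi(s)\,(\alpha - \beta).
\end{equation*}
The crucial observation is that the coefficient $p_\alpha^\pi(s)$ depends only on the transition structure $T$ and on $\pi$, and is entirely independent of the numerical sink values. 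Thus $V^\pi(s)$ is an affine function of $(\alpha,\beta)$ whose slope is fixed by the combinatorics of the chain.

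Next I would push the same representation through to the state-action values. Taking action $a$ at $s$ and thereafter following $\pi$ again yields an absorbing chain, so $Q^\pi(s,a) = \beta + q_\alpha^\pi(s,a)\,(\alpha - \beta)$ for an absorption probability $q_\alpha^\pi(s,a)$ that depends only on $T$, $a$ and $\pi$. Every decision SPI makes --- which states are improvable and which improving action has the highest index --- is governed purely by strict comparisons of the form $Q^\pi(s,a) > Q^\pi(s,b)$. Substituting the affine form, such a comparison is equivalent to
\begin{equation*}
\bigl(q_\alpha^\pi(s,a) - q_\alpha^\pi(s,b)\bigr)\,(\alpha - \beta) > 0,
\end{equation*}
so its truth value is determined solely by the transition-dependent sign of the first factor and by $\operatorname{sign}(\alpha - \beta)$.

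The conclusion then follows by induction on the SPI iteration count. Since both families $\mathcal{F}$ and $\mathcal{F}_C$ share the same transitions and differ only in the sign of $\alpha - \beta$, and since the starting policy is fixed, every improvability test and every highest-index action selection at a given step yields an identical outcome for any two sink assignments of the same relative order. Hence SPI visits exactly the same sequence of policies and, in particular, takes the same number of iterations, establishing the claimed invariance.

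I expect the main obstacle to be the first step: rigorously justifying that the induced chain is absorbing, so that $p_\alpha^\pi(s) + p_\beta^\pi(s) = 1$ for every $\pi$ and the value function is finite and well defined despite $\gamma = 1$. This reduces to verifying that the construction admits no reward-free recurrent class among the state and average vertices, which holds because every transition strictly decreases the vertex index and thereby forces the process downward to a sink. Once absorption is established, the fact that the absorption probabilities are genuinely transition-and-policy determined rather than reward-dependent is immediate, and the affine decomposition above does the rest.
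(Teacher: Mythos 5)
Your overall argument is correct and reaches the same conclusion as the paper, but by a genuinely more explicit route. The paper's proof observes that any order-preserving change of the sink pair $(k_1,k_2)$ is realized by a positive affine map $(k_1,k_2)\mapsto(Ak_1+B,Ak_2+B)$ with $A>0$, asserts that $Q$ and $V$ undergo the same transformation, and concludes that all strict comparisons (hence all switches) are unchanged. You instead derive the affine dependence from first principles, writing $V^\pi(s)=\beta+p_\alpha^\pi(s)(\alpha-\beta)$ with absorption probabilities that depend only on $T$ and $\pi$, and reduce every SPI decision to the sign of $\bigl(q_\alpha^\pi(s,a)-q_\alpha^\pi(s,b)\bigr)(\alpha-\beta)$. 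What your version buys is precisely what the paper leaves implicit: it proves, rather than asserts, that the values transform equivariantly, and it confronts the well-definedness of $V^\pi$ under $\gamma=1$, which rests on the induced chain being absorbing under \emph{every} policy --- a fact the paper's one-line proof silently assumes.

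However, your justification of that absorption step is wrong as stated. It is not true that ``every transition strictly decreases the vertex index'': action $1$ at state vertex $x$ moves to average vertex $x'$ (same index); actions $\mathcal{A}\ge 2$ at vertex $n$ move to $n'$ (same index); and, most importantly, action $k-1$ at a state vertex $x\in(1,\dots,n-1)$ moves deterministically to $(x+1)'$, and actions $\mathcal{A}\in\{2,\dots,k-2\}$ move there with probability $p_{\mathcal{A}}$ --- the index \emph{increases}. So a monotone-index potential does not exist, and the step as written fails. The claim itself is still true and the repair is short: average vertices form a strictly descending chain (from $s'$ the process reaches only $(s-1)'$ or state vertex $s-2$), so from any state vertex $x$ the process either moves to a strictly lower state vertex or enters the average chain at level at most $x+1$, from which every re-entry into the state vertices occurs at index at most $x-1$. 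Hence the maximal state-vertex index visited strictly decreases between successive visits to state vertices, there is no recurrent class among non-sink vertices, and absorption occurs with probability one. With that substitution your proof is complete and, if anything, more rigorous than the paper's.
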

\begin{proof}
Let sink values be $(\alpha,\beta)$ = ($k_1$,$k_2$).\\
The transformation of the sink reward maintaining the relative order can be expressed as by a linear transform: $$\mathcal{T}: (k_1,k_2)\rightarrow (A\times k_1+B,A\times k_2+B)$$ where $A\in \mathbb{R}^+, B\in \mathbb{R}$.\\
The linear transformation of the sink rewards would result in the same transformation to the $Q$ and $V$ values. As $A>0$, the relative orders in Q-values do no change and so the switches do not change.\\
\end{proof}

\begin{cor}
\label{lem:sink_val}
Sink values for the MDPs from family $\mathcal{F}$ can be set to $(\alpha,\beta)$ = (-1,0) and those for the complementary MDPs can be set to $(\alpha,\beta)$ =(1,0) without loss of generality.
\end{cor}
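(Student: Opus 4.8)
The plan is to obtain the corollary as an immediate consequence of Lemma~\ref{lem:sink_invariant}: since policy iteration depends only on the relative order of the two sink values, it suffices to exhibit, for any admissible pair of sink values, a linear transform of the form $\mathcal{T}:(k_1,k_2)\mapsto(Ak_1+B,Ak_2+B)$ with $A>0$ that carries it to the proposed canonical pair. The lemma then guarantees that the switching sequence --- and hence the iteration count --- is unchanged, which is exactly the ``without loss of generality'' assertion.

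For the family $\mathcal{F}$, I would start from arbitrary sink values $(\alpha,\beta)=(k_1,k_2)$ respecting the defining relative order $k_1<k_2$, and solve the two conditions $Ak_1+B=-1$ and $Ak_2+B=0$ for $A$ and $B$. Subtracting the first equation from the second yields $A(k_2-k_1)=1$, so $A=\tfrac{1}{k_2-k_1}$, which is strictly positive precisely because $k_2>k_1$; back-substitution then gives $B=-Ak_2$. This $\mathcal{T}$ has exactly the form required by Lemma~\ref{lem:sink_invariant}, so normalizing to $(\alpha,\beta)=(-1,0)$ is justified.

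For the complementary family $\mathcal{F}_C$, I would run the symmetric argument with target pair $(1,0)$ and the opposite relative order $k_1>k_2$, solving $Ak_1+B=1$ and $Ak_2+B=0$ to obtain $A=\tfrac{1}{k_1-k_2}>0$ and $B=-Ak_2$. Once more $A>0$ holds exactly because of the assumed order, so $(\alpha,\beta)=(1,0)$ is a valid canonical representative for $\mathcal{F}_C$.

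The only point demanding any care is the positivity constraint $A>0$ imposed by the lemma: it is what forces the sign of $\beta-\alpha$ in the canonical pair to match the sign of $k_2-k_1$ in the original, and it is the sole reason the two families require different canonical values $(-1,0)$ and $(1,0)$ rather than a single common choice. Everything else reduces to a one-line linear solve, so I do not anticipate any genuine obstacle.
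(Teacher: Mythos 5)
Your proposal is correct and follows exactly the route the paper intends: the corollary is stated as an immediate consequence of Lemma~\ref{lem:sink_invariant} (the paper gives no separate proof), and your explicit solve for $A=\tfrac{1}{|k_2-k_1|}>0$ and $B=-Ak_2$ simply makes that one-line normalization concrete. Your observation that the positivity of $A$ is what forces the two families to have different canonical pairs is accurate and consistent with the paper's setup.
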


\begin{lem}
\label{lem:Q_1}
At any time t, for $\mathcal{F}(n,k)$, 
\begin{equation*}
    Q^{t}(1, 1) > Q^{t}(1, 2) > Q^{t}(1, 3) > \ldots > Q^{t}(1, k-1)> Q^{t}(1, 0)
\end{equation*}
\end{lem}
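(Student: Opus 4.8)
The plan is to show that the $k$ quantities $Q^{t}(1,a)$ appearing in the statement are in fact all \emph{constant} in $t$, then compute them explicitly and simply read off the ordering. The reason they are time-independent is the crux of the argument: every action available at state vertex $1$ lands, deterministically or stochastically, either in a sink or in one of the two average vertices $1'$ and $2'$, and the values of $1'$ and $2'$ are pinned down entirely by the terminal sinks rather than by any state vertex whose value evolves across iterations.

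First I would enumerate, action by action, the transitions out of state vertex $1$ using the construction rules of Section~4: action $0$ goes deterministically to the sink of value $\alpha=-1$; action $1$ goes deterministically to $1'$; each intermediate action $\mathcal{A}\in\{2,\ldots,k-2\}$ goes to $2'$ with probability $p_{\mathcal{A}}$ and to $1'$ with probability $q_{\mathcal{A}}=1-p_{\mathcal{A}}$; and action $k-1$ goes deterministically to $2'$. Next I would evaluate $V(1')$ and $V(2')$ and argue they are policy-independent: the average vertex $1'$ transitions with equal probability to vertices of index $\le 0$, which are sinks of value $0$, so $V(1')=0$; the average vertex $2'$ transitions with equal probability to the sink of value $\alpha=-1$ and to $1'$, so $V(2')=\tfrac{1}{2}(-1)+\tfrac{1}{2}(0)=-\tfrac{1}{2}$. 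Neither expression ever references a state vertex of positive index, so no policy change can alter them and both are independent of $t$.

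Substituting these values then gives $Q^{t}(1,0)=-1$, $Q^{t}(1,1)=V(1')=0$, $Q^{t}(1,\mathcal{A})=p_{\mathcal{A}}V(2')+q_{\mathcal{A}}V(1')=-p_{\mathcal{A}}/2$ for $2\le\mathcal{A}\le k-2$, and $Q^{t}(1,k-1)=V(2')=-\tfrac{1}{2}$. The claimed chain of strict inequalities follows immediately from the prescribed increasing order $0<p_{2}<p_{3}<\cdots<p_{k-2}<1$ together with $p_{\mathcal{A}}\in(0,1)$, since these yield $0>-p_{2}/2>\cdots>-p_{k-2}/2>-1/2>-1$, which is exactly $Q^{t}(1,1)>Q^{t}(1,2)>\cdots>Q^{t}(1,k-1)>Q^{t}(1,0)$.

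The hard part — really the only step requiring care — is the time-independence claim: one must check that the recursions defining $V(1')$ and $V(2')$ bottom out at sinks without ever invoking a state vertex whose value changes with the policy, so that these two values are genuinely fixed constants. Once that is secured, the lemma reduces to a one-line comparison, and, crucially, it then holds uniformly over \emph{all} $t$, which is precisely the form needed when this ordering is invoked in the later recursion for $\mathcal{N}(n,k)$.
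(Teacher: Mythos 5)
Your proposal is correct and follows essentially the same route as the paper's proof: fix the policy-independent values $V^{t}(1')=0$ and $V^{t}(2')=-\tfrac{1}{2}$, plug them into the action values at state vertex $1$, and read off the ordering from $0<p_{2}<\cdots<p_{k-2}<1$. You are somewhat more thorough than the paper, which justifies the time-independence only with the phrase ``by the structure of the MDP'' and omits the intermediate case $Q^{t}(1,\mathcal{A})=-p_{\mathcal{A}}/2$ for $2\le\mathcal{A}\le k-2$ from its displayed case analysis, whereas you spell both out explicitly.
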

\begin{proof}
By the structure of the MDP,
$V^t(2')=-\frac{1}{2}$ and $V^t(1')=0$\\
This results in
\begin{equation*}
    Q^t(1,i) = \begin{cases}
    -1, & \text{ if } i=0\\
    0, & \text{ if } i=1\\
    -1/2, & \text{ if } i=k-1\\
    \end{cases}
\end{equation*}
By the construction of $\mathcal{F}$,
\begin{align*}
    p_i &\in (0,1) \quad \forall i \in \{2,3,..k-1\}\\
    p_j &> p_i \quad \forall j>i, i,j \in \{2,3,..k-1\}
\end{align*}
Plugging the values of $Q^t(1,\cdot)$ will yield the desired relation.
\end{proof}

\begin{lem}
\label{lem:QC_1}
At any time t, for $\mathcal{F}_C(n,k)$, 
\begin{equation*}
    Q^{t}(1, 0) > Q^{t}(1, k-1) > Q^{t}(1, k-2) > \ldots > Q^{t}(1, 2)> Q^{t}(1, 1)
\end{equation*}
\end{lem}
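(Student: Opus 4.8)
The plan is to mirror the computation used for Lemma~\ref{lem:Q_1}, exploiting the fact that $\mathcal{F}_C(n,k)$ differs from $\mathcal{F}(n,k)$ only in the relative order of the sink values, which by Corollary~\ref{lem:sink_val} we may take to be $(\alpha,\beta)=(1,0)$. First I would evaluate the two average vertices that state vertex $1$ can reach. Since the transitions out of average vertices are policy-independent, $V^t(1')$ and $V^t(2')$ are fixed throughout the run and can be read off directly from the graph: $1'$ reaches only the $\beta$-sink, giving $V^t(1')=0$, while $2'$ splits equally between the $\alpha$-sink and $1'$, giving $V^t(2')=\tfrac12\alpha+\tfrac12 V^t(1')=\tfrac12$. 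This is precisely where the complementary sign enters: compared with $\mathcal{F}$, where $V^t(2')=-\tfrac12$, flipping the $\alpha$-sink from $-1$ to $1$ flips $V^t(2')$ from $-\tfrac12$ to $+\tfrac12$, whereas $V^t(1')$ is untouched because $1'$ never reaches the $\alpha$-sink.

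Next I would assemble $Q^t(1,i)$ for each action $i$ from the transition rules of the construction. Action $0$ sends state $1$ to the $\alpha$-sink, so $Q^t(1,0)=1$; action $1$ sends it to $1'$, so $Q^t(1,1)=0$; action $k-1$ sends it deterministically to $2'$, so $Q^t(1,k-1)=V^t(2')=\tfrac12$; and for $i\in\{2,\dots,k-2\}$ the stochastic transition gives
\begin{equation*}
Q^t(1,i)=p_i V^t(2')+(1-p_i)V^t(1')=\tfrac12 p_i .
\end{equation*}

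Finally I would invoke the ordering $0<p_2<p_3<\dots<p_{k-2}<1$ from the construction of $\mathcal{F}$ (inherited by $\mathcal{F}_C$). Since $Q^t(1,i)=\tfrac12 p_i$ is strictly increasing in $i$ on $\{2,\dots,k-2\}$ and lies in $(0,\tfrac12)$, and since $Q^t(1,1)=0$, $Q^t(1,k-1)=\tfrac12$, $Q^t(1,0)=1$, chaining these values yields
\begin{equation*}
Q^t(1,0)>Q^t(1,k-1)>Q^t(1,k-2)>\dots>Q^t(1,2)>Q^t(1,1),
\end{equation*}
as required. The argument holds for every $t$ because none of the quantities above depend on the current policy at the state vertices. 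The only real subtlety --- the ``main obstacle'' such as it is --- is bookkeeping the boundary transitions of the low-index average vertices so that $V^t(2')$ picks up the flipped $\alpha$-sink while $V^t(1')$ does not; once those two values are pinned down, the claim is simply the order-reversed image of Lemma~\ref{lem:Q_1}.
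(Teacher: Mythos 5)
Your proposal is correct and follows essentially the same route as the paper's own proof: fix $V^t(1')=0$ and $V^t(2')=\tfrac12$ from the graph structure, compute $Q^t(1,i)$ for each action, and chain the inequalities using $0<p_2<\dots<p_{k-2}<1$. If anything, you are slightly more careful than the paper, which lumps action $k-1$ into the case $Q^t(1,i)=p_i/2$ for all $i>1$ (implicitly taking $p_{k-1}=1$), whereas you correctly treat the deterministic transition $Q^t(1,k-1)=V^t(2')=\tfrac12$ as its own case.
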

\begin{proof}
By the structure of the MDP,
$V^t(2')=\frac{1}{2}$ and $V^t(1')=0$\\
This results in
\begin{equation*}
    Q^t(1,i) = \begin{cases}
    1, & \text{ if } i=0\\
    0, & \text{ if } i=1\\
    p_i/2 ,& \forall i>1
    \end{cases}
\end{equation*}
By the construction of $\mathcal{F}_C$,
\begin{align*}
    p_i &\in (0,1) \quad \forall i \in \{2,3,..k-1\}\\
    p_j &> p_i \quad \forall j>i, i,j \in \{2,3,..k-1\}
\end{align*}
Plugging the values of $Q^t(1,\cdot)$ will yield the desired relation.
\end{proof}

\begin{lem}[Baseline]\label{lem:baseline}
$$\mathcal{N}(2,k) = k+1  \quad \forall k\geq 3$$  
\end{lem}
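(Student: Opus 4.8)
The plan is to directly simulate the SPI trajectory for $\mathcal{F}(2,k)$ and count the switches, exploiting the fact that there are only two relevant state vertices. First I would record the $Q$-values as functions of the current policy. Using $V^t(2')=-\tfrac12$ and $V^t(1')=0$ together with the transition rules, state $2$ (which is the vertex $n$) satisfies $Q^t(2,0)=V^t(1)$ and $Q^t(2,\mathcal{A})=-\tfrac12$ for every $\mathcal{A}\in\{1,\dots,k-1\}$, while state $1$ obeys the strict ordering of Lemma~\ref{lem:Q_1}, namely $Q^t(1,1)=0 > Q^t(1,2) > \cdots > Q^t(1,k-1)=-\tfrac12 > Q^t(1,0)=-1$. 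I would also tabulate $V^t(1)$ as a function of $\pi^t(1)$: it equals $0$ at action $1$, $-\tfrac12$ at action $k-1$, $-1$ at action $0$, and lies strictly in $(-\tfrac12,0)$ for the intermediate actions, decreasing in the action index. The single key consequence is that state $2$ prefers action $0$ exactly when $V^t(1) > -\tfrac12$ and prefers any action in $\{1,\dots,k-1\}$ exactly when $V^t(1) < -\tfrac12$.

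With these facts I would trace the run in phases, at each step applying the rule ``switch the highest-index improvable state to its highest-index improving action.'' Starting from $(0,0)$: since $V^t(1)=-1<-\tfrac12$, state $2$ is improvable and, having higher index, switches to $k-1$ (Phase 1). Now state $2$ is locked (all of its actions $1,\dots,k-1$ tie at $-\tfrac12$) while state $1$ is still at action $0$, so state $1$ switches to $k-1$ (Phase 2) and then, because $Q^t(1,\cdot)$ is strictly decreasing on $\{1,\dots,k-1\}$, to $k-2$ (Phase 3). At this point $V^t(1)=-p_{k-2}/2 > -\tfrac12$, so state $2$ becomes improvable again and, outranking state $1$, switches back to $0$ (Phase 4). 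From here $V^t(1)$ stays in $(-\tfrac12,0]$ for the remainder of the run, so $Q^t(2,0)=V^t(1) > -\tfrac12$ keeps action $0$ optimal for state $2$ and it is never improvable again. Thus only state $1$ moves, and each switch sends it from action $\mathcal{A}$ to action $\mathcal{A}-1$ until it reaches action $1$, where $V^t(1)=0$ is maximal and SPI halts at the optimum $(0,1)$ (Phase 5).

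Finally I would count: state $2$ switches exactly twice ($0\to k-1$ and $k-1\to 0$), while state $1$ switches once into $k-1$ and then decrements $k-1\to k-2\to\cdots\to 1$, for a total of $1+(k-2)=k-1$ switches. Summing gives $2+(k-1)=k+1$, as claimed, and a quick check against the $k=3$ table (where Phase 5 is empty and the run is $(0,0),(2,0),(2,2),(2,1),(0,1)$) confirms the bookkeeping. The main obstacle I anticipate is justifying the interleaving rather than the arithmetic: I must argue that state $2$ is improvable at precisely two moments and dormant otherwise, since as the higher-index vertex it always preempts state $1$ whenever both are improvable. This reduces to the single threshold comparison of $V^t(1)$ against $-\tfrac12$, so the care lies in verifying that the only crossing of this threshold occurs exactly at the $k-1\to k-2$ switch of state $1$, and that state $1$'s subsequent descent never pulls $V^t(1)$ back below $-\tfrac12$.
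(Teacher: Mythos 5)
Your proposal is correct and follows essentially the same route as the paper's proof: both trace the identical switch sequence $(0,0)\to(k-1,0)\to(k-1,k-1)\to(k-1,k-2)\to(0,k-2)\to\cdots\to(0,1)$ and count $4+(k-3)=k+1$ iterations. Your explicit threshold observation --- that state $2$ is improvable exactly when $V^t(1)$ crosses $-\tfrac{1}{2}$, which happens only at the $k-1\to k-2$ switch --- is a slightly cleaner packaging of the same bookkeeping the paper does step by step.
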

\begin{proof}
We have the initial policy $\pi^0 = S_{2} S_{1} = 00$ and
\begin{equation*}
    Q^{0}(2,0)=-1 \text{ and } Q^0(2, a)=-\frac{1}{2} \quad \forall a>0
\end{equation*}
As per our definition of SPI, policy with highest improvable action will be chosen. Consequently,
\begin{equation*}
    \pi^1 = S_{2} S_{1} = k-1, 0
\end{equation*}
Next we have $Q^{1}(2, k-1) \geq Q(2, a)  \quad \forall a$, we focus on improving state 1. We observe
\begin{equation*}
    Q^{1}(1, 0)=-1 \text{ and } Q^{1}(1, k-1)=-\frac{1}{2}
\end{equation*}
Hence,
\begin{equation*}
    \pi^2 = S_{2} S_{1} = k-1, k-1
\end{equation*}
Even now, state 2 is not improvable and we have 
\begin{equation*}
    Q^{2}(1, k-2)=\frac{-p_{k-2}}{2} \geq \frac{-1}{2}
\end{equation*}
 by our choice of $p_{k-2}$. Hence, 
 \begin{equation*}
     \pi^3 = S_{2} S_{1} = k-1, k-2 
 \end{equation*}
 Now,
 \begin{equation*}
    Q^{3}(2, 0) = V^3(1) = \frac{-p_{k-2}}{2} > \frac{-1}{2} = Q^{3}(2, a) \quad \forall a \neq 0 
 \end{equation*}
 So,
 \begin{equation*}
    \pi^4 = S_{2} S_{1} = 0(k-2) 
 \end{equation*}
 For all consequent $i^{th}$ iteration,
 \begin{equation*}
    Q^{i}(2, 0) = V^i(1) >  \frac{-1}{2} \text{ but } Q^{i}(2, a) = V^i(2') = \frac{-1}{2}
 \end{equation*}
   and hence only $S_{1}$ is improvable. From Lemma \ref{lem:QC_1}, we have
\begin{equation*}
    Q^{4}(1, 1) > Q^{4}(1, 2) > \ldots Q^{4}(1, k-2)
\end{equation*}
 Thus, the next $k-3$ iterations are required to reach
\begin{equation*}
    \pi^{k+1} = S_{2} S_{1} = 01
\end{equation*}
thus giving a total of $k+1$ iterations.

\end{proof}

\begin{lem}
\label{lem:FC_1}
For the $\mathcal{F}_{C}(n,k)$ with initial policy: $S_n S_{n-1}\ldots S_2 S_1 = 00\ldots01$  it takes exactly $\mathcal{N}(n-1,k)$ iterations before policy at the first state vertex changes.
\end{lem}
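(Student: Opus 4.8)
The plan is to reduce the portion of the SPI run on $\mathcal{F}_C(n,k)$ during which $\pi(1)$ stays fixed at action $1$ to a complete run of SPI on the smaller original-family MDP $\mathcal{F}(n-1,k)$, and then read off the count from the definition of $\mathcal{N}(n-1,k)$. First I would freeze $\pi(1)=1$ and record the value it forces. Since action $1$ at state $1$ sends the agent to the average vertex $1'$, and by the structure of $\mathcal{F}_C$ we have $V^t(1')=0$ (this is the fact already used in Lemma~\ref{lem:QC_1}), state $1$ behaves for every higher-indexed vertex as a terminal vertex of constant value $0$, independent of the policy on states $2,\dots,n$. I would also note from Lemma~\ref{lem:QC_1} that $Q^t(1,0)=1>0=Q^t(1,1)$ at all times, so state $1$ is in fact improvable throughout; the content of the argument is that SPI's rule of switching the highest-indexed improvable state never selects it while any state in $\{2,\dots,n\}$ is improvable.

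The heart of the proof is an isomorphism between the subsystem on $\{2,\dots,n\}\cup\{2',\dots,n'\}$ (with state $1$ frozen) and $\mathcal{F}(n-1,k)$ under the re-indexing $j\mapsto j-1$, $j'\mapsto (j-1)'$. Under this map the transition graph, the action set, and the probabilities $p_{\mathcal{A}}$ are preserved verbatim, so the only thing to check is the boundary. Here the frozen state $1$ (value $0$) plays the role of the reduced $\alpha$-sink, and the average vertex $2'$ (value $\tfrac12$, since $2'$ splits between the $\alpha$-sink of value $1$ and $1'$ of value $0$) plays the role of the reduced $\beta$-sink, giving an effective sink pair $(0,\tfrac12)$. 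This pair has the same relative order as the pair $(-1,0)$ of $\mathcal{F}$ and is obtained from it by the order-preserving affine map $x\mapsto \tfrac12 x+\tfrac12$; hence by Lemma~\ref{lem:sink_invariant} and Corollary~\ref{lem:sink_val} the induced SPI dynamics on $\{2,\dots,n\}$ coincide exactly with those of $\mathcal{F}(n-1,k)$. I would verify that the boundary values then propagate correctly — for instance $V^t(3')=\tfrac12(V^t(1)+V^t(2'))=\tfrac14$ matches $\tfrac12(\alpha_{\mathrm{eff}}+\beta_{\mathrm{eff}})$ — which is the only routine computation required. Because the $Q$-value orderings and the index labels are preserved by the map, the SPI selection of state and action in the subsystem agrees step for step with that of $\mathcal{F}(n-1,k)$.

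Finally I would close the counting argument. The initial policy $00\dots01$ restricts to $00\dots0$ on $\{2,\dots,n\}$, which is precisely the start configuration counted by $\mathcal{N}(n-1,k)$; since the frozen state $1$ contributes only a constant, SPI produces the identical switching sequence on $\{2,\dots,n\}$. At each of the first $\mathcal{N}(n-1,k)$ iterations the image system has not yet converged, so some state of index $\geq 2$ is improvable and is selected in preference to state $1$, leaving $\pi(1)$ untouched. After exactly $\mathcal{N}(n-1,k)$ iterations the subsystem reaches its optimum and no state in $\{2,\dots,n\}$ is improvable, whereas state $1$ is still improvable ($Q(1,0)=1>0$), so SPI switches state $1$ on the very next step. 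Thus it takes exactly $\mathcal{N}(n-1,k)$ iterations before $\pi(1)$ changes. The step I expect to be the main obstacle is the second one: confirming rigorously that freezing $\pi(1)$ leaves the subsystem's transitions and boundary values genuinely identical, up to the affine sink transform, to those of $\mathcal{F}(n-1,k)$, so that Lemma~\ref{lem:sink_invariant} applies cleanly and the two switching sequences can be identified.
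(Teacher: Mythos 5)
Your proposal is correct and takes essentially the same route as the paper's proof: the paper likewise freezes $\pi(1)=1$, collapses state $1$ and the sinks into effective sinks $(\alpha,\beta)=(0,\tfrac{1}{2})$, and invokes Lemma~\ref{lem:sink_invariant} to identify the residual system on states $2,\dots,n$ with $\mathcal{F}(n-1,k)$ started from $00\ldots0$, reading off exactly $\mathcal{N}(n-1,k)$ iterations. Your added details --- the explicit order-preserving map $x\mapsto \tfrac{1}{2}x+\tfrac{1}{2}$, the boundary-value check at $3'$, and the observation that state $1$ stays improvable throughout but is never selected under the highest-index rule --- merely make explicit what the paper leaves implicit.
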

\begin{proof}
Due to the switching rule of SPI, $S_1$ will only change when all the other state vertices are not improvable. Until all the higher states finish improving, the current sinks and state 1 can be effectively reduced to new sinks with $(\alpha,\beta) = (0,\frac{1}{2})$. This reduction is shown in Figure~\ref{fig:Fc_red}. Using Lemma~\ref{lem:sink_invariant} the resultant MDP is equivalent to $\mathcal{F}(n-1,k)$ having initial policy $S_n S_{n-1}\ldots S_2 = 00\ldots00$. This MDP takes $N(n-1,k)$ iterations to converge to the optimal policy $S_n S_{n-1}\ldots S_2 = 00\ldots01$. By this logic after the $\mathcal{N}(n-1,k)$ iterations, the policy would be  $S_n S_{n-1}\ldots S_2 S_1 = 00\ldots11$
\end{proof}

\begin{figure}[htbp]
    \centering
    \includegraphics[width=\linewidth]{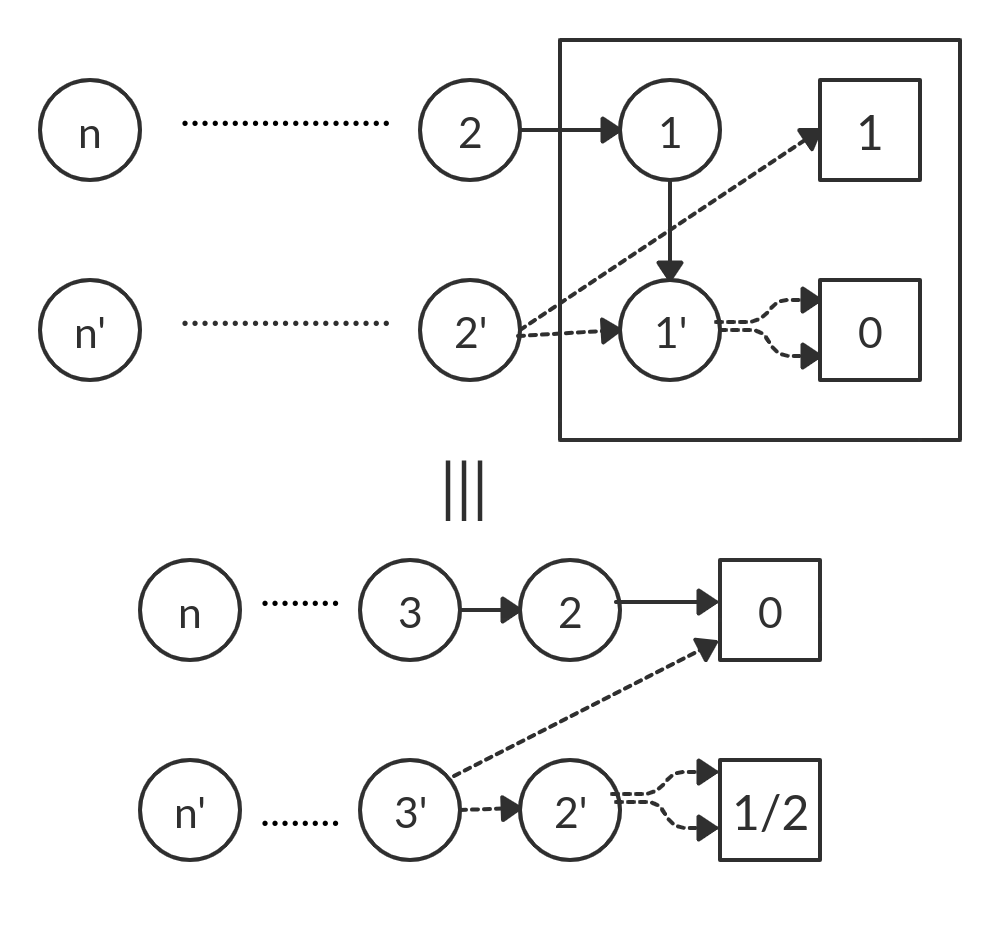}
    \caption{ Reducing $\mathcal{F}_C(n,k)$ to $\mathcal{F}(n-1,k)$ with respect to its initial policy}
    \label{fig:Fc_red}
\end{figure}

\begin{lem}
\label{lem:FC_2}
If $\pi^t = S_n S_{n-1}\ldots S_2 S_1 = 00\ldots11$ for $\mathcal{F}_{C}(n,k)$, the next 2 switches of simple policy iteration occur at state 1
\end{lem}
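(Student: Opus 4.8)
The plan is to follow SPI explicitly across the two switches, showing that both at $\pi^t = 00\ldots11$ and at the policy produced by the first switch, state $1$ is the \emph{unique} improvable state; this forces SPI to act on state $1$ twice in succession.

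First I would evaluate $V^t$ at $\pi^t = 00\ldots11$. Since $S_1=1$ routes state $1$ to the average vertex $1'$ and $S_2=1$ routes state $2$ to $2'$, the identities $V^t(1')=0$ and $V^t(2')=\tfrac12$ (which hold for $\mathcal{F}_C$ and are fixed by the sinks alone) give $V^t(1)=0$ and $V^t(2)=\tfrac12$, and the action-$0$ states $3,\ldots,n$ propagate $V^t(j)=\tfrac12$ down the chain. A one-line induction on $V^t(j') = \tfrac12\big(V^t(j-2)+V^t((j-1)')\big)$, seeded by $V^t(3')=\tfrac14$, then gives $V^t(j') < \tfrac12$ for all $j\ge3$.

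For the first switch I would invoke Lemma~\ref{lem:QC_1}: at state $1$ the current value is $Q^t(1,1)=0$ while $Q^t(1,0)=1$ and $Q^t(1,i)=p_i/2>0$ for $i\ge2$, so the improving actions are $\{0,2,\ldots,k-1\}$ and the highest-indexed one is $k-1$ (the deterministic transition to $2'$, i.e.\ the $p_{k-1}=1$ endpoint of the ordering). To guarantee state $1$ is the selected, i.e.\ highest-indexed improvable, state I would check that states $2,\ldots,n$ are not improvable: each has current value $\tfrac12$, while every alternative action yields a Q-value strictly below $\tfrac12$ (it leads either to a state/average of value $<\tfrac12$ or to a convex combination of such values). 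Hence SPI switches state $1$ to $k-1$, producing $\pi^{t+1}=00\ldots1(k-1)$; this is the first of the two switches.

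The second switch is the delicate part and is where I expect the real obstacle. Switching $S_1$ to $k-1$ raises $V(1)$ from $0$ to $V(2')=\tfrac12$, and this increase must be propagated through the average vertices. The crucial computation is that with $V(1)=V(2')=\tfrac12$ the evaluation recursion forces $V(j')=\tfrac12$ for every $j\ge2$, hence $V(j)=\tfrac12$ for every state. Consequently all Q-values at states $2,\ldots,n$ collapse to exactly $\tfrac12$, so none is improvable, whereas at state $1$ we still have $Q(1,0)=1>\tfrac12=Q(1,k-1)$, leaving action $0$ as the only improving action. Thus state $1$ is again the unique improvable state and switches to $0$, giving $\pi^{t+2}=00\ldots10$, so both switches occur at state $1$. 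The main effort is exactly this propagation step: one must confirm that perturbing $V(1)$ upward does not make any higher-indexed state improvable — equivalently, that all the $V(j')$ jump simultaneously to $\tfrac12$ — since a single improvable state of larger index would otherwise be selected before the second switch at state $1$.
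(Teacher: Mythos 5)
Your proof is correct, and it arrives at the paper's two switches (state $1$ moving $1 \to k-1 \to 0$, ending at $00\ldots10$) by the same underlying mechanism, just packaged differently. The paper never recomputes values at $\pi^t$: it inherits non-improvability of states $2,\ldots,n$ from Lemma~\ref{lem:FC_1} (the policy restricted to those states is already optimal for the reduced MDP), and for the second switch it collapses state $1$ together with the old sinks into new effective sinks $(\alpha,\beta)=(\tfrac12,\tfrac12)$, then argues that equal sink values force all action values at states $2,\ldots,n$ to coincide, so nothing above state $1$ can be improvable; Lemma~\ref{lem:QC_1} supplies the order $k-1$ then $0$ at state $1$ exactly as in your argument. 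Your explicit policy evaluation — $V^t(j)=\tfrac12$ for $j\ge 2$, the induction $V^t(j')<\tfrac12$ for $j\ge 3$ seeded at $V^t(3')=\tfrac14$, and after the first switch the propagation $V(j')=V(j)=\tfrac12$ everywhere — establishes precisely the same two facts by direct computation. What your route buys is self-containedness: the lemma as stated is a pure ``if $\pi^t=00\ldots11$'' claim, and you verify it without assuming anything about how $\pi^t$ was reached, whereas the paper's proof silently leans on the context provided by Lemma~\ref{lem:FC_1}. What it costs is redoing the value calculations that the paper's sink-reduction device (Lemma~\ref{lem:sink_invariant} plus the figures) was designed to avoid; in particular, the step you flag as the ``delicate part'' — checking that raising $V(1)$ to $\tfrac12$ makes no higher state improvable — is, in the paper's framing, immediate: once the two effective sinks are equal, every Q-value at states $2,\ldots,n$ is a convex combination of $\tfrac12$'s, so your simultaneous-jump computation is exactly the equal-sinks observation made concrete, and no further propagation argument is needed.
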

\begin{proof}
Using Lemma~\ref{lem:FC_1} the policy $\pi^t$ is optimal with respect to the states $n,n-1 \ldots 3,2$ and is $S_n S_{n-1}\ldots S_2 = 00\ldots01$. Vertex 1 is the only improvable state and according to Lemma~\ref{lem:QC_1} it switches to its highest indexed improvable action $k-1$. Hence the policy becomes  $S_n S_{n-1}\ldots S_2 S_1 = 0,0\ldots1,k-1$. With respect to the current policy the sinks and state 1 can be effectively reduced to new sinks with $(\alpha,\beta) = (\frac{1}{2},\frac{1}{2})$. This reduction is shown in Figure~\ref{fig:Fc_red_2}. The  action values are equal for equal sink values and hence the policy for states $n,n-1 \ldots 3,2$ is still optimal. State 1 is still the only improvable state and according to Lemma~\ref{lem:QC_1} it switches to the next improvable action 0. This completes the proof. The policy would now be  $S_n S_{n-1}\ldots S_2 S_1 = 00\ldots10$
\end{proof}

\begin{figure}[htbp]
    \centering
    \includegraphics[width=\linewidth]{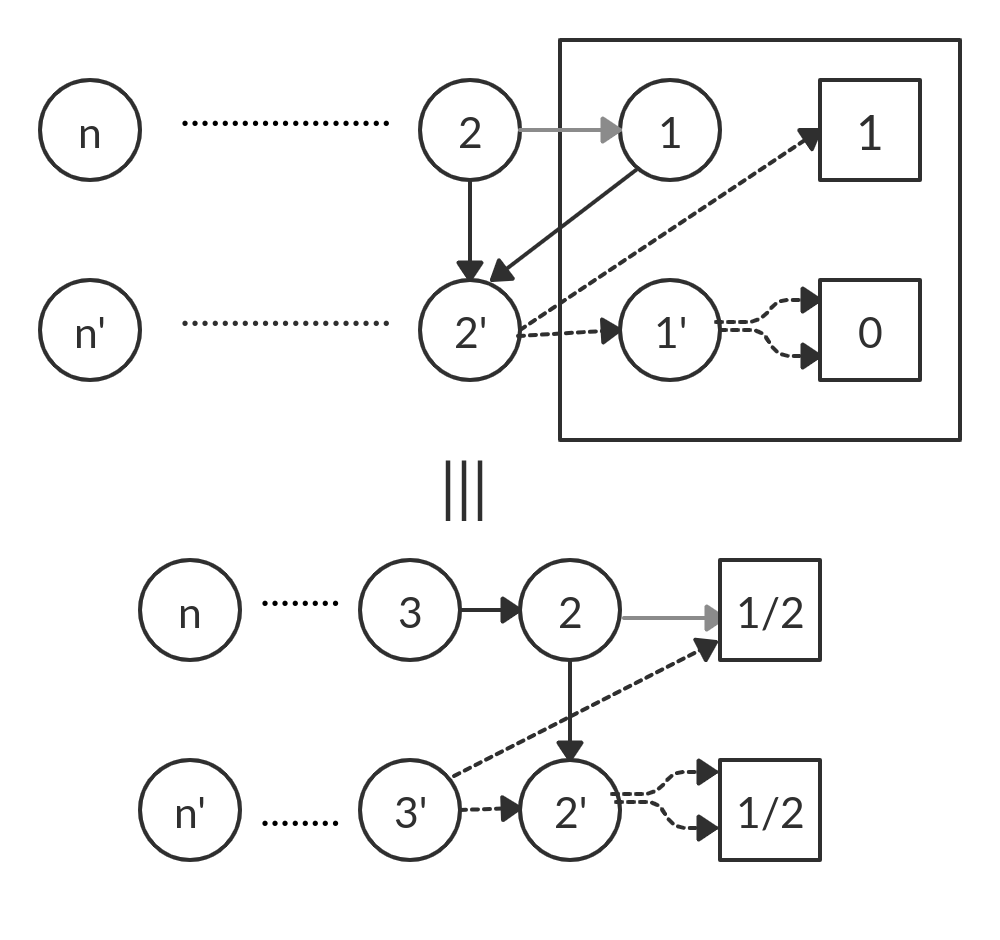}
    \caption{Reducing $\mathcal{F}_{C}(n,k)$ with respect to the policy $S_n S_{n-1}\ldots S_2 S_1 = 0,0\ldots1,k-1$}
    \label{fig:Fc_red_2}
\end{figure}

\begin{lem}
\label{lem:FC_3}
If $\pi^t = S_n S_{n-1}\ldots S_2 S_1 = 00\ldots10$ for $\mathcal{F}_{C}(n,k)$, it takes $\mathcal{N}_C(n-1,k)$ iterations to converge to optimal policy. 
\end{lem}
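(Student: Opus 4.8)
The plan is to mirror the reduction used in Lemma~\ref{lem:FC_1} and Figure~\ref{fig:Fc_red}: I will argue that once the policy is $S_n S_{n-1}\ldots S_2 S_1 = 00\ldots10$, state $1$ is permanently frozen at its optimal action, so that every remaining switch occurs among states $2,\ldots,n$, and the sub-problem induced on those states is exactly a copy of $\mathcal{F}_C(n-1,k)$ started from its own initial policy.

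First I would show that state $1$ never becomes improvable again. In this policy state $1$ plays action $0$, which transitions deterministically to the sink of value $\alpha = 1$, so $V^t(1) = 1$. Since for $\mathcal{F}_C$ every state value is a convex combination of the sink values $1$ and $0$, we have $V^t(s)\le 1$ for all $s$, hence $V^t(1)=1$ is maximal; together with Lemma~\ref{lem:QC_1}, which gives $Q^t(1,0) > Q^t(1,a)$ for every $a\neq 0$, this makes state $1$ non-improvable at every subsequent step. Consequently the SPI switching rule always selects a state in $\{2,\ldots,n\}$, and state $1$ retains action $0$ until convergence.

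Next I would perform the reduction. Because $V^t(1)=1$ is constant and the boundary average vertices are policy-independent ($V^t(1')=0$ and $V^t(2')=\tfrac12(\alpha+\beta)=\tfrac12$), I collapse state $1$ together with the sinks into new effective sinks for the sub-MDP on states $2,\ldots,n$: action $0$ of the new lowest state (old state $2$) lands on the frozen state $1$, contributing $\alpha'=1$, while the value fed in through old vertex $2'$ contributes $\beta'=V^t(2')=\tfrac12$. One then checks that after re-indexing $i\mapsto i-1$ every average vertex takes exactly the corresponding family value (e.g. old $3'\mapsto$ new $2'$ has value $\tfrac12(1+\tfrac12)=\tfrac34$) and that the transition structure of states $3,\ldots,n$ coincides with the family. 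Thus the sub-problem is the family MDP with sinks $(\alpha',\beta')=(1,\tfrac12)$; since $\alpha'>\beta'$ this has the relative order of $\mathcal{F}_C$, and by Lemma~\ref{lem:sink_invariant} (via the affine map $x\mapsto 2x-1$) it is equivalent to $\mathcal{F}_C(n-1,k)$ with sinks $(1,0)$. The policy on states $2,\ldots,n$ is $00\ldots1$, which re-indexes to the initial policy $00\ldots01$ of $\mathcal{F}_C(n-1,k)$; hence the sub-problem needs exactly $\mathcal{N}_C(n-1,k)$ switches to reach $00\ldots0$, i.e. the global optimum $00\ldots00$.

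The main obstacle is making the reduction rigorous rather than merely value-matching. The delicate point is that the new vertex $1'$ (old $2'$) has successors of unequal value ($1$ and $0$) instead of two identical $\beta'$-sinks as in the literal family. I would resolve this by observing that switches at any state depend only on its $Q$-values, hence only on the numerical value $V^t(2')=\tfrac12$ of that average vertex and not on its internal structure, and that average vertices never switch. I would also verify that the highest-index-state, highest-index-action tie-breaking of SPI is preserved under the re-indexing $i\mapsto i-1$ (state $1$ being non-improvable never interferes), so the switch sequence of the sub-problem matches that of $\mathcal{F}_C(n-1,k)$ step-for-step, yielding exactly $\mathcal{N}_C(n-1,k)$ iterations.
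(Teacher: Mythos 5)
Your proof is correct and takes essentially the same route as the paper: the paper's proof likewise collapses state $1$ together with the sinks into new effective sinks $(\alpha,\beta)=(1,\tfrac{1}{2})$ and invokes Lemma~\ref{lem:sink_invariant} to identify the residual problem with $\mathcal{F}_C(n-1,k)$ started from policy $00\ldots01$, which by definition takes $\mathcal{N}_C(n-1,k)$ iterations. Your extra verifications---that state $1$ is permanently non-improvable since $V^t(1)=1$ is maximal, the explicit affine map $x\mapsto 2x-1$, and the observation that the boundary average vertex matters only through its value $\tfrac{1}{2}$ despite its non-standard internal structure---are details the paper leaves implicit, so yours is simply a more careful rendering of the same argument.
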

\begin{proof}
With respect to $\pi^t = S_n S_{n-1}\ldots S_2 S_1 = 00\ldots10$ the sinks and state 1 can be effectively reduced to new sinks with $(\alpha,\beta) = (1,\frac{1}{2})$. This reduction is shown in Figure~\ref{fig:Fc_red_3}. Invoking Lemma~\ref{lem:sink_invariant} this MDP is equivalent to $\mathcal{F}_{C}(n-1,k)$ with initial policy $S_n S_{n-1}\ldots S_2 = 00\ldots01$ and hence takes $\mathcal{N}_{C}(n-1,k)$ iterations to converge to optimal policy $S_{n-1}\ldots S_2 = 00\ldots00$. The complete policy is now $S_{n-1}\ldots S_2 S_1 = 00\ldots00$ which is also the optimal policy for $F_{C}(n,k)$
\end{proof}
\begin{figure}[htbp]
    \centering
    \includegraphics[width=\linewidth]{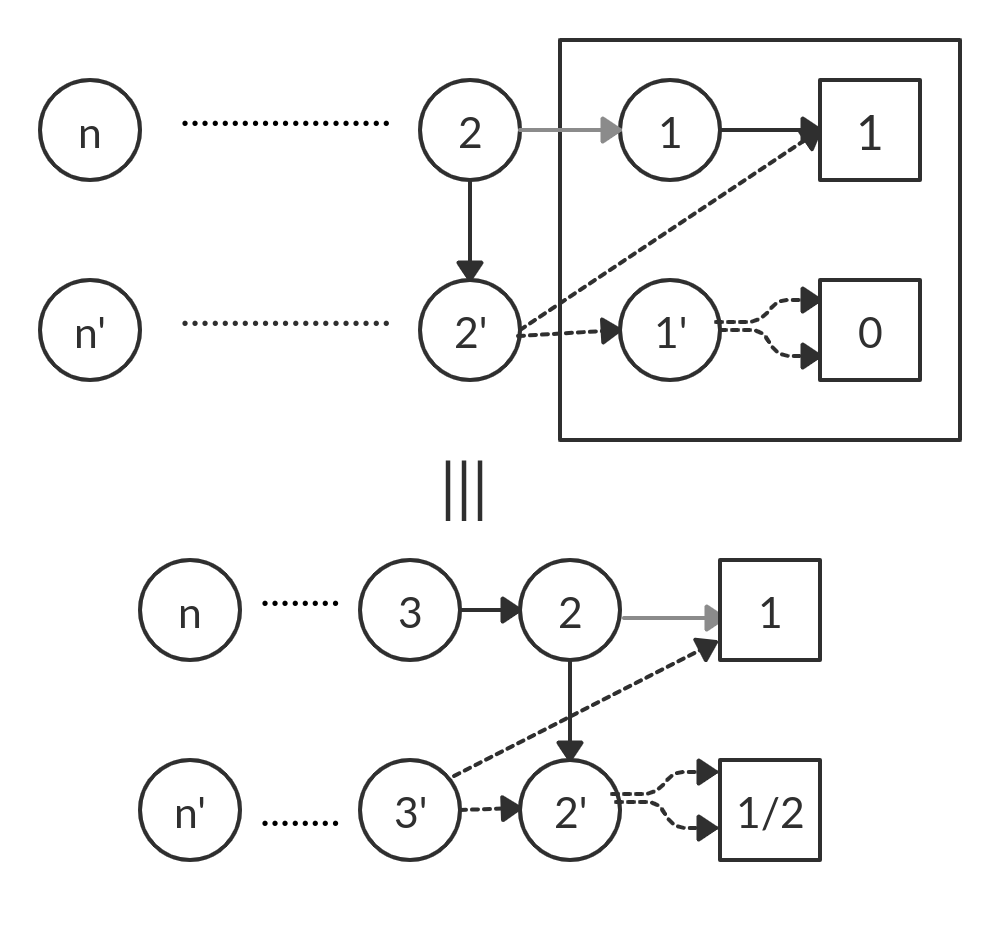}
     \caption{Reducing $\mathcal{F}_{C}(n,k)$ to  $\mathcal{F}_{C}(n-1,k)$with respect to policy $S_n S_{n-1}\ldots S_2 S_1 = 0,0\ldots1,0$}
    \label{fig:Fc_red_3}
\end{figure}

\begin{thm}
\begin{equation} \label{eq:NC}
    \mathcal{N}_C(n+1,k) = \mathcal{N}(n,k) +2+\mathcal{N}_C(n,k)
\end{equation}
\end{thm}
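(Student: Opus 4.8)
The plan is to prove the recurrence by decomposing a full run of SPI on $\mathcal{F}_C(n+1,k)$ (started from the policy $S_{n+1}\ldots S_1 = 00\ldots01$) into three consecutive phases, each of which has already been analyzed in Lemmas~\ref{lem:FC_1}, \ref{lem:FC_2}, and \ref{lem:FC_3}. The key observation is that these three lemmas are stated for a generic $\mathcal{F}_C(n,k)$, so I would simply instantiate each of them with $n \to n+1$ and then verify that the terminal policy of each phase is exactly the initial policy required by the next phase, so that the iteration counts add without gaps or overlaps.

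Concretely, I would proceed as follows. First, invoking Lemma~\ref{lem:FC_1} on $\mathcal{F}_C(n+1,k)$, the policy at the first state vertex remains fixed for exactly $\mathcal{N}((n+1)-1,k) = \mathcal{N}(n,k)$ iterations, after which the policy is $S_{n+1}\ldots S_1 = 00\ldots11$. Second, with the run now at $00\ldots11$, I apply Lemma~\ref{lem:FC_2}, which guarantees the next two switches both occur at state $1$ and bring the policy to $00\ldots10$; this contributes the additive constant $2$. Third, starting from $00\ldots10$, Lemma~\ref{lem:FC_3} gives that SPI takes a further $\mathcal{N}_C((n+1)-1,k) = \mathcal{N}_C(n,k)$ iterations to reach the optimal policy $00\ldots00$ of $\mathcal{F}_C(n+1,k)$.

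Summing the three phase lengths yields the total iteration count
\begin{equation*}
    \mathcal{N}_C(n+1,k) = \mathcal{N}(n,k) + 2 + \mathcal{N}_C(n,k),
\end{equation*}
which is the claimed recurrence. The only thing that needs genuine care is the chaining itself: I must confirm that the three phases partition the entire trajectory, i.e. that the terminal policy of Lemma~\ref{lem:FC_1} ($00\ldots11$) matches the hypothesis of Lemma~\ref{lem:FC_2}, that its terminal policy ($00\ldots10$) matches the hypothesis of Lemma~\ref{lem:FC_3}, and that the terminal policy of Lemma~\ref{lem:FC_3} is indeed optimal for $\mathcal{F}_C(n+1,k)$, so that no additional iterations are unaccounted for. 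Since the three lemmas were designed precisely to dovetail in this way, the main (and essentially the only) obstacle is this bookkeeping of the intermediate policies; once it is checked, the arithmetic of summing the counts is immediate.
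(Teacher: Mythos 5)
Your proposal is correct and follows exactly the paper's own argument: the paper proves this theorem by ``sequentially applying'' Lemmas~\ref{lem:FC_1}--\ref{lem:FC_3}, which is precisely your three-phase decomposition with the intermediate-policy bookkeeping made explicit. In fact, your write-up is more careful than the paper's one-line proof, since you verify that each phase's terminal policy matches the next phase's hypothesis.
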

\begin{proof}
This can be proved by sequentially applying Lemmas 7.6-7.8 
\end{proof}

\begin{lem}
\label{lem:F_1}
For t $\mathcal{F}(n,k)$ having initial policy $S_n S_{n-1}\ldots S_2 S_1 = 00\ldots00$, it takes exactly $\mathcal{N}(n-1,k)$ iterations before policy at the first state vertex changes.
\end{lem}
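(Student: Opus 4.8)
The plan is to mirror the argument of Lemma~\ref{lem:FC_1} exactly, but now for the original family started at the all-zeros policy. First I would invoke the SPI switching rule: since the improvable state of highest index is always the one that is switched, state $1$ (the lowest index) can be switched only at an iteration when none of the states $2,3,\ldots,n$ is improvable. Hence, as long as some higher state is still improvable, SPI leaves $\pi(1)=0$ untouched. Under action $0$ the state vertex $1$ transitions deterministically to the sink with value $\alpha=-1$, so $V^t(1)=-1$ stays frozen throughout this entire stretch of iterations, independently of the policy on the higher states.

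Next I would perform the reduction. Because $V^t(1)$ is the constant $-1$, I would collapse state $1$ together with the two original sinks into a pair of effective sinks for the sub-MDP on the state vertices $2,\ldots,n$. Here state $2$ under action $0$ reaches state $1$ (value $-1$), which plays the role of a new sink $\alpha'=-1$, while the average vertex $2'$, whose value is the constant $-\tfrac{1}{2}$ established in the proof of Lemma~\ref{lem:Q_1}, plays the role of a new sink $\beta'=-\tfrac{1}{2}$. Relabelling each original state $s$ as state $s-1$, the sub-MDP on $\{2,\ldots,n\}$ is structurally identical to $\mathcal{F}(n-1,k)$ with sink values $(\alpha',\beta')=(-1,-\tfrac{1}{2})$.

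I would then observe that the relative order $\alpha'<\beta'$ agrees with the order $-1<0$ of the original family, so the reduced instance belongs to $\mathcal{F}$ rather than to the complementary family $\mathcal{F}_C$; by Lemma~\ref{lem:sink_invariant} it is equivalent to $\mathcal{F}(n-1,k)$ with the canonical sinks $(-1,0)$. Since every higher state starts at action $0$, the reduced initial policy is $00\ldots00$, which is exactly the start used in the definition of $\mathcal{N}$. By that definition the reduced instance needs precisely $\mathcal{N}(n-1,k)$ iterations to reach its optimum $00\ldots01$, and during all of them state $1$ is never the highest-index improvable state and so is never switched. Therefore it takes exactly $\mathcal{N}(n-1,k)$ iterations before the policy at state $1$ changes, at which point the full policy reads $S_n\ldots S_2 S_1=0\ldots010$.

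The main obstacle I anticipate is making the reduction rigorous rather than merely pictorial. One must check that freezing state $1$ at value $-1$ genuinely decouples the evolution of states $2,\ldots,n$ from state $1$; this holds because action $0$ at state $1$ leads straight to a fixed-value sink, so no feedback from the higher states can ever alter $V^t(1)$. One must also check that the effective sink values $(-1,-\tfrac{1}{2})$ are constant for the whole run, which follows from the fact that average-vertex values never change under PI. A secondary point to confirm is that state $1$ cannot spuriously be selected for switching before the reduced MDP has converged; this is taken care of entirely by the highest-index tie-break of SPI, which defers any switch at state $1$ until states $2,\ldots,n$ are jointly non-improvable.
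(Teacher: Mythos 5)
Your proposal is correct and follows the paper's own proof essentially verbatim: the same collapse of state $1$ and the original sinks into effective sinks $(\alpha,\beta)=(-1,-\tfrac{1}{2})$, the same appeal to Lemma~\ref{lem:sink_invariant} to identify the reduced instance with $\mathcal{F}(n-1,k)$ started at $00\ldots00$, and the same use of the highest-index SPI rule to keep state $1$ unswitched until convergence of the higher states, ending at $S_n\ldots S_2S_1=0\ldots010$. The additional care you take in justifying that $V^t(1)=-1$ and the average-vertex values remain constant throughout merely makes explicit what the paper leaves implicit.
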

\begin{proof}
Due to the switching rule of SPI, $S_1$ will only change when all the other state vertices are not improvable. Until the higher states improve, the current sinks and state 1 can be effectively reduced to new sinks with $(\alpha,\beta) = (-1,-\frac{1}{2})$. This reduction is shown in Figure \ref{fig:F_red}. Using Lemma 7.1 the resultant MDP is equivalent to $\mathcal{F}(n-1,k)$ having initial policy $S_n S_{n-1}\ldots S_2 = 00\ldots00$. This MDP takes $N(n-1,k)$ iterations to converge to the optimal policy $S_n S_{n-1}\ldots S_2 = 00\ldots01$.
By this logic after the $\mathcal{N}(n-1,k)$ iterations, the policy would be  $S_n S_{n-1}\ldots S_2 S_1 = 00\ldots10$
\end{proof}

\begin{figure}[htbp]
    \centering
    \includegraphics[width=\linewidth]{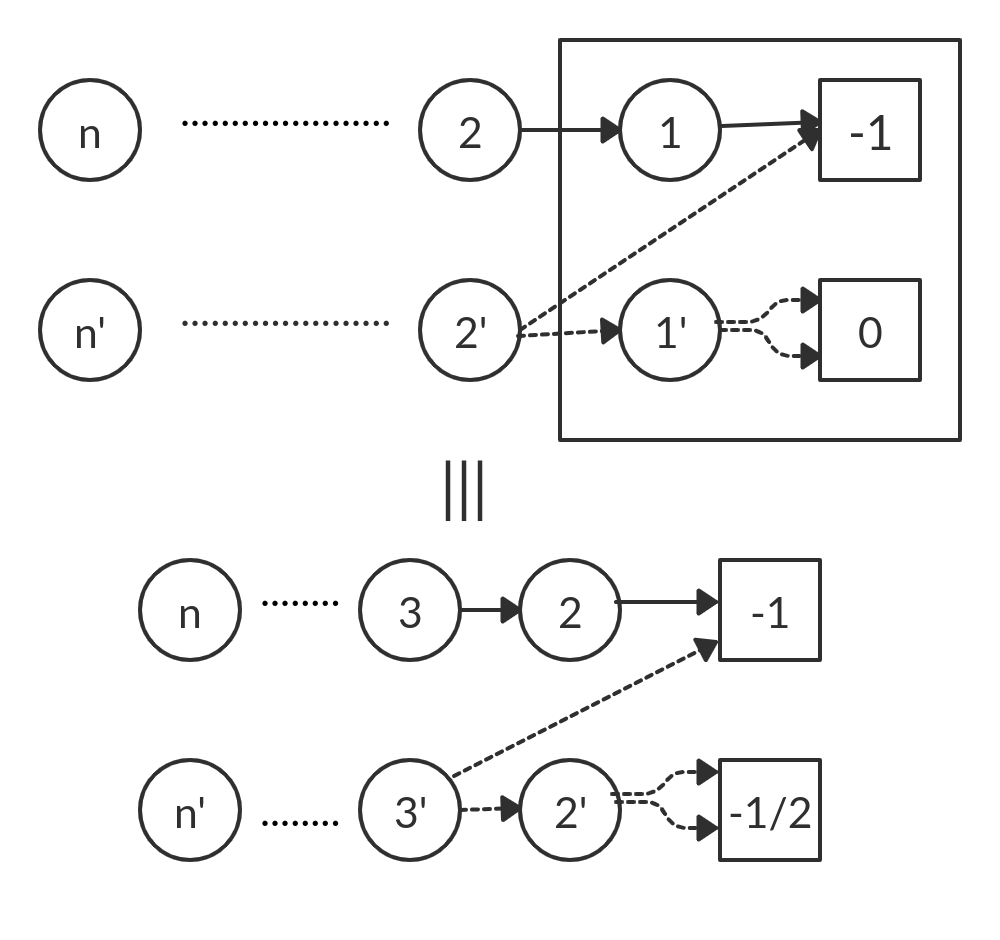}
    \caption{ Reducing $\mathcal{F}(n,k)$ to $\mathcal{F}(n-1,k)$ with respect to its initial policy}
    \label{fig:F_red}
\end{figure}

\begin{lem}
\label{lem:F_2}
If $\pi^t = S_n S_{n-1}\ldots S_2 S_1 = 00\ldots10$, the next 2 switches of simple policy iteration occur at vertex 1
\end{lem}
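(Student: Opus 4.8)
The plan is to mirror the argument of Lemma~\ref{lem:FC_2}, but driven by the $Q$-value ordering for $\mathcal{F}$ (Lemma~\ref{lem:Q_1}) in place of the one for $\mathcal{F}_C$. First I would invoke Lemma~\ref{lem:F_1}: having arrived at $\pi^t = S_n S_{n-1}\ldots S_2 S_1 = 00\ldots10$, the policy is already optimal with respect to states $n,n-1,\ldots,2$ (the reduced copy of $\mathcal{F}(n-1,k)$ has converged), so state~$1$ is the only improvable state. Since state~$1$ currently plays action~$0$ with $V^t(1)=Q^t(1,0)=-1$, Lemma~\ref{lem:Q_1} shows that each of the actions $1,2,\ldots,k-1$ is improving; SPI therefore switches state~$1$ to its highest-indexed improving action $k-1$, giving $\pi^{t+1}=S_n S_{n-1}\ldots S_2 S_1 = 0,0\ldots1,(k-1)$.

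The crux is to show that state~$1$ is again the \emph{unique} improvable state, so that the second switch also lands there rather than being diverted to a higher vertex. With state~$1$ now playing action $k-1$ (a deterministic move to $2'$), its value is frozen at $V^{t+1}(1)=V(2')=-\tfrac12$. I would then reduce the two sinks together with state~$1$ to a single effective pair of sinks, exactly as in Lemma~\ref{lem:F_1} (the construction of Figure~\ref{fig:F_red}) but with state~$1$ now held at $-\tfrac12$ rather than $-1$; relabelling $s\mapsto s-1$ and $s'\mapsto (s-1)'$ and appealing to Lemma~\ref{lem:sink_invariant}, this yields effective sink values $(\alpha,\beta)=(-\tfrac12,-\tfrac12)$. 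Because the MDP is undiscounted with zero non-terminal reward, \emph{equal} sink values force every policy on states $2,\ldots,n$ to have the common value $-\tfrac12$, so all their $Q$-values coincide and none of them is improvable.

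Finally, state~$1$ itself remains improvable: by Lemma~\ref{lem:Q_1} its best action~$1$ has value $0>-\tfrac12$, and the highest-indexed action whose $Q$-value exceeds $Q^{t+1}(1,k-1)=-\tfrac12$ is $k-2$ (since $Q^{t+1}(1,k-2)=-p_{k-2}/2>-\tfrac12$ as $p_{k-2}\in(0,1)$). SPI therefore switches state~$1$ a second time, to action $k-2$, giving $\pi^{t+2}=S_n S_{n-1}\ldots S_2 S_1 = 0,0\ldots1,(k-2)$, and both switches have occurred at vertex~$1$, as claimed. The main obstacle is the middle step: one must argue cleanly that the collapse to equal effective sinks renders the entire upper block inert, so SPI is compelled back to state~$1$; I expect the routine parts (the exact $Q^{t}(1,\cdot)$ values and the structural relabelling of the reduction) to follow directly from Lemmas~\ref{lem:sink_invariant} and~\ref{lem:Q_1}.
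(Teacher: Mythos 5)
Your proposal is correct and follows essentially the same route as the paper's proof: first switch at vertex~$1$ to action $k-1$ via Lemma~\ref{lem:Q_1}, then the reduction of state~$1$ and the sinks to effective sinks $(\alpha,\beta)=(-\tfrac12,-\tfrac12)$, whose equality renders states $2,\ldots,n$ non-improvable, forcing the second switch at vertex~$1$ to action $k-2$. Your version is in fact slightly more careful than the paper's (you justify the upper block's optimality via Lemma~\ref{lem:F_1} and spell out why equal sink values make all $Q$-values coincide, steps the paper asserts tersely), but the decomposition and key lemmas are identical.
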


\begin{proof}
Using Lemma~\ref{lem:Q_1} the policy $\pi^t$ is optimal with respect to the states $n,n-1 \ldots 3,2$ and is $S_n S_{n-1}\ldots S_2 = 00\ldots01$. Vertex 1 is the only improvable state and according to Lemma~\ref{lem:Q_1} it switches to its highest indexed improvable action $k-1$. Hence the policy becomes  $S_n S_{n-1}\ldots S_2 S_1 = 0,0\ldots1,k-1$. With respect to the current policy the sinks and state 1 can be effectively reduced to new sinks with $(\alpha,\beta) = (-\frac{1}{2},-\frac{1}{2})$. This reduction is shown in Figure~\ref{fig:F_red_2}. The  action values are equal for equal sink values and hence the policy for states $n,n-1 \ldots 3,2$ is still optimal. State 1 is still the only improvable state and according to Lemma~\ref{lem:Q_1} it switches to the next improvable action k-2. This completes the proof. The policy would now be  $S_n S_{n-1}\ldots S_2 S_1 = 00\ldots1,k-2$
\end{proof}

\begin{figure}[htbp]
    \centering
    \includegraphics[width=\linewidth]{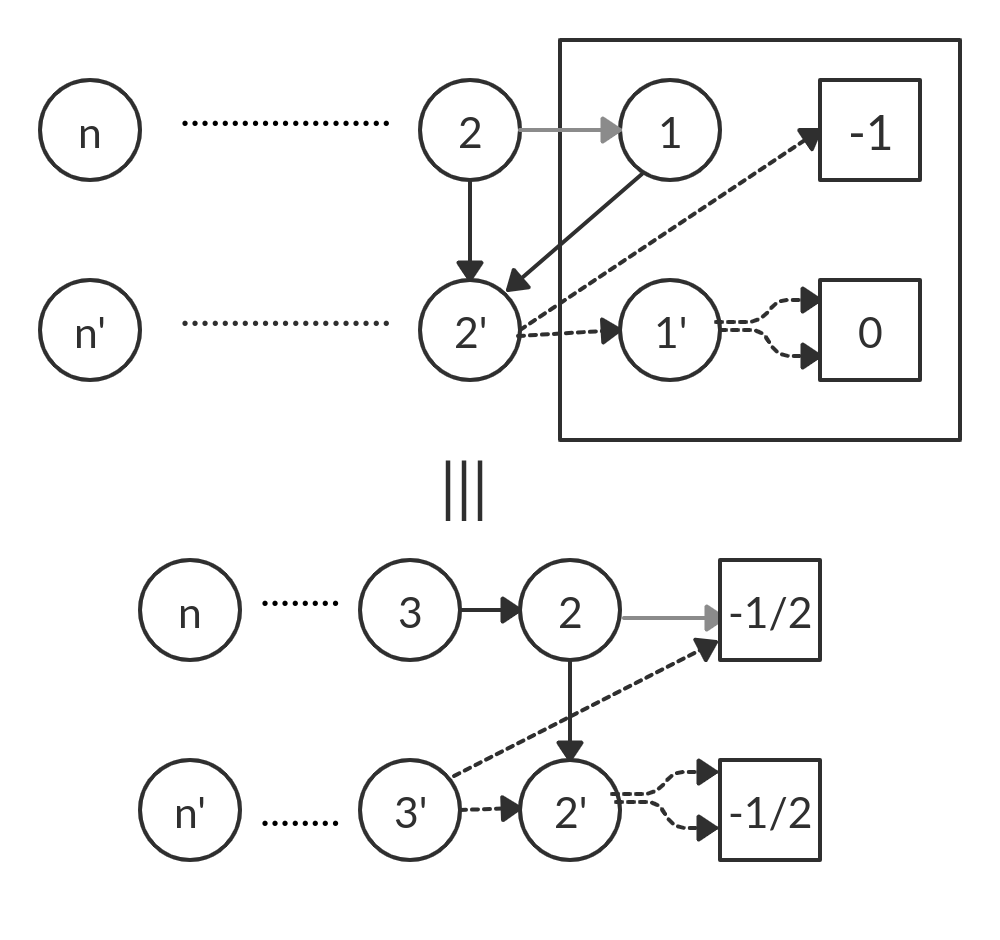}
    \caption{Reducing $\mathcal{F}(n,k)$ with respect to policy $S_n S_{n-1}\ldots S_2 S_1 = 0,0\ldots1,k-1$}
    \label{fig:F_red_2}
\end{figure}

\begin{lem}
\label{lem:F_3}
If $\pi^t = S_n S_{n-1}\ldots S_2 S_1 = 00\ldots1,k-2$ for $\mathcal{F}(n,k)$, it takes $\mathcal{N}_C(n-1,k)$ iterations before policy at the first state vertex changes. 
\end{lem}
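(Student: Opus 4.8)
The plan is to treat this statement as the $\mathcal{F}$-analogue of the final reduction in the complementary chain (Lemma~\ref{lem:FC_3}), the crucial new feature being that the effective sink order \emph{flips}, so that the reduced sub-problem belongs to the complementary family rather than to $\mathcal{F}$ itself. First I would fix the incoming policy $\pi^t = S_n S_{n-1}\ldots S_2 S_1 = 00\ldots1,k-2$ and evaluate state $1$. Since action $k-2$ sends state $1$ stochastically to $2'$ with probability $p_{k-2}$ and to $1'$ with probability $1-p_{k-2}$, and since $V^t(2')=-\frac{1}{2}$ and $V^t(1')=0$ exactly as computed in Lemma~\ref{lem:Q_1}, this gives $V^t(1) = -\frac{p_{k-2}}{2}$.

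Next I would collapse state $1$ together with the original sinks into a pair of effective sinks for the sub-MDP on states $2,\ldots,n$, relabelling old state $s$ as new state $s-1$ and old average $s'$ as new $(s-1)'$. The value seen through state $2$'s action $0$ becomes the new top sink $\alpha' = V^t(1) = -\frac{p_{k-2}}{2}$, while the bottom sink is $\beta' = V^t(2') = -\frac{1}{2}$; a consistency check that the new $V(2') = \tfrac{1}{2}(\alpha'+\beta')$ agrees with the old $V^t(3')$ confirms the collapse. The key observation is that $p_{k-2}\in(0,1)$ forces $\alpha' = -\frac{p_{k-2}}{2} > -\frac{1}{2} = \beta'$, i.e. the effective sinks now satisfy $\alpha' > \beta'$, which is the relative order of the complementary family. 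By Lemma~\ref{lem:sink_invariant} the sub-problem is therefore equivalent to $\mathcal{F}_C(n-1,k)$, and since states $2,\ldots,n$ carry policy $00\ldots01$ under $\pi^t$, the relabelled initial policy is exactly $00\ldots01$, which is the starting policy defining $\mathcal{N}_C(n-1,k)$.

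Finally I would invoke the definition of $\mathcal{N}_C(n-1,k)$: the sub-problem runs for exactly $\mathcal{N}_C(n-1,k)$ iterations before reaching its optimum $00\ldots00$. Throughout this run at least one of the states $2,\ldots,n$ is improvable, so by the SPI rule (highest-indexed improvable state first) state $1$ is never selected and $\pi^t(1)=k-2$ is preserved; state $1$ can only switch on the very next iteration, once the sub-problem has converged. This yields exactly $\mathcal{N}_C(n-1,k)$ iterations before the policy at the first state vertex changes, leaving the policy at $00\ldots0,k-2$.

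The step I expect to be the main obstacle is justifying that the collapse to two effective sinks is valid for the \emph{entire} duration of the sub-run rather than only at $\pi^t$: I must argue that while the higher states switch, $V^t(1)$ and $V^t(2')$ stay pinned at $-\frac{p_{k-2}}{2}$ and $-\frac{1}{2}$ (because state $1$'s action and the transitions feeding $2'$ are unchanged), and that no higher-state switch can make state $1$ improvable before the sub-problem terminates. Verifying this invariance, together with checking the boundary behaviour when $k-2$ coincides with a deterministic action (e.g. $k=3$, where action $k-2=1$ gives $V^t(1)=0$, which still exceeds $\beta'$ so the flip persists), is where the care is required; the arithmetic of the sink reduction itself is routine.
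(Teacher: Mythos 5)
Your proof follows essentially the same route as the paper's: collapse state $1$ and the original sinks into effective sinks $(\alpha,\beta)=\left(-\tfrac{p_{k-2}}{2},-\tfrac{1}{2}\right)$, observe the flipped relative order, and invoke Lemma~\ref{lem:sink_invariant} to identify the residual problem with $\mathcal{F}_C(n-1,k)$ started at $00\ldots01$, which by definition runs for exactly $\mathcal{N}_C(n-1,k)$ iterations before state $1$ can switch. Your added care about the invariance of $V^t(1)$ and $V^t(2')$ during the sub-run and the $k=3$ boundary case (where action $k-2$ is deterministic) only makes explicit what the paper leaves implicit.
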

\begin{proof}
With respect to $\pi^t = S_n S_{n-1}\ldots S_2 S_1 = 00\ldots1,k-2$ the sinks and state 1 can be effectively reduced to new sinks with $(\alpha,\beta) = (-\frac{p_{k-2}}{2},-\frac{1}{2})$. This reduction is shown in Figure~\ref{fig:F_red_3}. Invoking Lemma~\ref{lem:sink_invariant} this MDP is equivalent to $\mathcal{F}_{C}(n-1,k)$ with initial policy $S_n S_{n-1}\ldots S_2 = 00\ldots01$ and hence takes $\mathcal{N}_{C}(n-1,k)$ iterations to converge to optimal policy $S_{n-1}\ldots S_2 = 00\ldots00$. The complete policy is now $S_{n-1}\ldots S_2 S_1 = 00\ldots0,k-2$ 
\end{proof}
\begin{figure}[htbp]
    \centering
    \includegraphics[width=\linewidth]{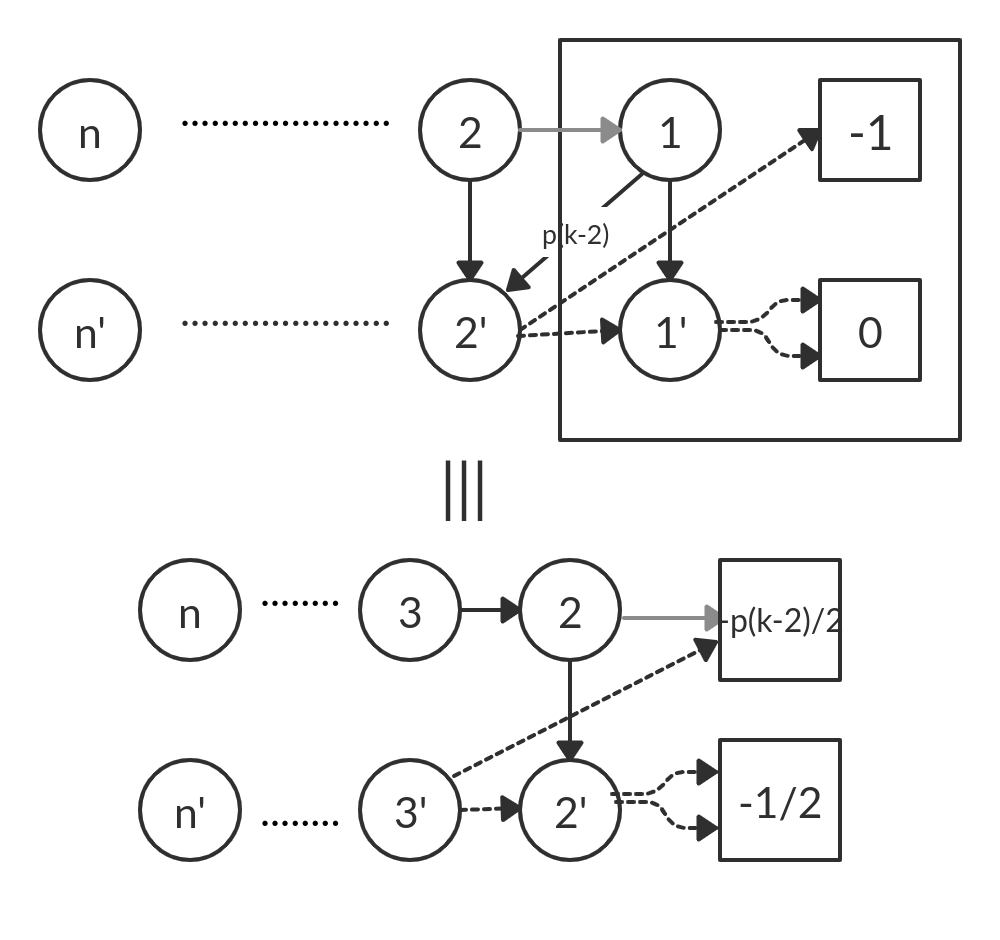}
    \caption{ Reducing $\mathcal{F}(n,k)$ to $\mathcal{F}_c(n-1,k)$ with respect to policy $S_n S_{n-1}\ldots S_2 S_1 = 0,0\ldots1,k-2$ }
    \label{fig:F_red_3}
\end{figure}

\begin{lem}
\label{lem:F_4}
If $\pi^t = S_n S_{n-1}\ldots S_2 S_1 = 0, 0, \ldots, k-2$ for $\mathcal{F}(n,k)$, it takes $k-3$ iterations to converge to optimal policy. 
\end{lem}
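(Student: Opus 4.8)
The plan is to show that, starting from $\pi^t = 00\ldots0(k-2)$, each of the next $k-3$ switches occurs at state $1$, which decrements its action index by exactly one per iteration ($k-2 \to k-3 \to \cdots \to 2 \to 1$), after which the policy is the optimum $00\ldots01$ and no state is improvable.

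First I would pin down the behaviour at state $1$ via Lemma~\ref{lem:Q_1}, which supplies at every iteration the fixed preference order $Q(1,1) > Q(1,2) > \cdots > Q(1,k-1) > Q(1,0)$. If state $1$ currently plays action $j$ with $2 \le j \le k-2$, its improvable actions are exactly $\{1,\dots,j-1\}$, so the SPI rule (switch to the highest-indexed improvable action) selects $j-1$. Iterating from $j=k-2$ down to $j=1$ produces the claimed chain and accounts for $(k-2)-1 = k-3$ switches; once state $1$ reaches action $1$ it attains its maximal value and is no longer improvable.

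The hard part will be verifying that states $2,\dots,n$ stay non-improvable throughout the phase, so that SPI is forced onto state $1$ at every step. Since each of these states plays action $0$ (a deterministic step one vertex down), their values collapse to $V(n)=\cdots=V(2)=V(1)$, hence $Q(x,0)=V(1)$ for each such $x$. Every other action at $x$ routes through an average vertex, so I would bound these: from $V(x') = \tfrac12 V(x-2) + \tfrac12 V((x-1)')$ together with $V(2')=-\tfrac12$ and $V(x-2)=V(1)$, a short induction yields $-\tfrac12 \le V(x') < V(1)$ whenever $V(1) > -\tfrac12$. As each $Q(x,a)$ with $a \neq 0$ is a convex combination of such average-vertex values, it falls strictly below $V(1) = Q(x,0)$, so action $0$ remains strictly optimal at $x$ and $x$ is not improvable. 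This uniform bounding of the average-vertex values is the real content of the lemma.

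Finally I would discharge the standing hypothesis $V(1) > -\tfrac12$. At the start of the phase $V(1) = Q(1,k-2) = -\tfrac{p_{k-2}}{2} > -\tfrac12$ because $p_{k-2}\in(0,1)$, and every subsequent switch only raises $V(1)$ (it moves toward the best action $1$, where $V(1)=0$); hence $V(1) > -\tfrac12$ holds throughout. Putting the three pieces together, the phase takes exactly $k-3$ iterations, all at state $1$, and terminates at the optimal policy $00\ldots01$.
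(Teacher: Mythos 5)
Your proof is correct, and it takes a genuinely different route from the paper's. The paper argues by reduction: it collapses state 1 and the sinks into effective sinks $(\alpha,\beta) = (-\tfrac{p_{j}}{2}, -\tfrac12)$ (with $j$ the current action at state 1), invokes the sink-invariance result (Lemma~\ref{lem:sink_invariant}) to identify the residual MDP over states $2,\ldots,n$ with $\mathcal{F}_C(n-1,k)$ at its optimal policy $00\ldots0$, and concludes that no switch can occur above state 1; the $k-3$ state-1 switches then follow from Lemma~\ref{lem:Q_1} exactly as in your chain $k-2 \to k-3 \to \cdots \to 1$. You instead prove the non-improvability of states $2,\ldots,n$ from first principles: the collapse $V(n)=\cdots=V(2)=V(1)$ under the all-zeros upper policy, the induction $V(x') < V(1)$ (base $V(2')=-\tfrac12$) valid whenever $V(1) > -\tfrac12$, and the monotonicity $V(1) = -\tfrac{p_j}{2} \uparrow 0$ along the phase, which discharges the hypothesis at every step. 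In effect your value-bounding induction re-derives the quantitative content that the paper's reduction encapsulates — namely that when the effective sink pair is in the complementary order ($-\tfrac{p_j}{2} > -\tfrac12$), the all-zeros policy is optimal for the upper states. What the paper's route buys is brevity and stylistic uniformity with Lemmas~\ref{lem:FC_1}--\ref{lem:F_3}, though its ``effectively reduced to new sinks'' step is left informal and it presupposes that $00\ldots0$ is optimal for $\mathcal{F}_C$; your route is longer but self-contained, makes the per-iteration invariant explicit, and does not need the complementary family at all. One small remark: your lower bound $-\tfrac12 \le V(x')$ is never actually used — the upper-bound half of the induction alone forces $Q(x,a) < V(1) = Q(x,0)$ for all $a \neq 0$ — so you could drop it without loss.
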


\begin{proof}
With respect to $\pi^t = S_n S_{n-1}\ldots S_2 S_1 = 0, 0, \ldots, k-2$  for $\mathcal{F}(n,k)$ the sinks and state 1 can be effectively reduced to new sinks with $(\alpha,\beta) = (-\frac{p_{k-2}}{2},-\frac{1}{2})$. Invoking Lemma~\ref{lem:sink_invariant} this MDP is equivalent to $\mathcal{F}_{C}(n-1,k)$ with initial policy $S_n S_{n-1}\ldots S_2 = 00\ldots00$ and hence this is already optimal. This means that only change that will happen is at state 1. Using Lemma \ref{lem:Q_1} we get an incremental change in the improvable policy. Also, at any subsequent stage $i$ the value of sink is $(\alpha,\beta) = (-\frac{p_{i}}{2},-\frac{1}{2})$, which again implies only state 1 can be improved. Hence it would take $k-3$ iterations to converge to the optimal policy $S_n S_{n-1}\ldots S_2 S_1 = 00\ldots01$

\end{proof}

\begin{thm}
\begin{equation}\label{eq:N}
  \mathcal{N}(n+1,k) = \mathcal{N}(n,k) +2+\mathcal{N}_C(n,k)+(k-3)  
\end{equation}
\end{thm}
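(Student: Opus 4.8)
The plan is to mirror the proof of the companion recurrence~\eqref{eq:NC}: decompose a complete run of SPI on $\mathcal{F}(n+1,k)$, started from the all-zero policy $S_{n+1}S_n\ldots S_1 = 00\ldots00$, into four consecutive phases, each governed by one of Lemmas~\ref{lem:F_1}--\ref{lem:F_4} applied to the $(n+1)$-state MDP (so that the ``$n$'' appearing in each lemma is played by ``$n+1$''). The length of the whole run is then the sum of the four phase lengths, and I would read off exactly the four summands on the right-hand side.

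Concretely, I would proceed in order. First, Lemma~\ref{lem:F_1} gives the opening phase: starting from $00\ldots00$, SPI needs $\mathcal{N}(n,k)$ iterations before the policy at state $1$ first changes, at which point the policy is $00\ldots10$. Second, Lemma~\ref{lem:F_2} takes over at $00\ldots10$ and contributes the next $2$ iterations, both at state $1$, ending at $00\ldots1,k-2$. Third, Lemma~\ref{lem:F_3} starts precisely at $00\ldots1,k-2$ and contributes $\mathcal{N}_C(n,k)$ iterations (obtained by collapsing the sinks and invoking Lemma~\ref{lem:sink_invariant} to recover an $\mathcal{F}_C(n,k)$ sub-problem) before the policy at state $1$ changes again, ending at $00\ldots0,k-2$. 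Finally, Lemma~\ref{lem:F_4} starts at $00\ldots0,k-2$ and contributes the closing $k-3$ iterations to reach the optimal policy $00\ldots01$. Adding the phase lengths gives
\begin{equation*}
  \mathcal{N}(n+1,k) = \mathcal{N}(n,k) + 2 + \mathcal{N}_C(n,k) + (k-3),
\end{equation*}
as claimed.

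The one thing that genuinely needs checking — and the step I would treat as the crux — is that the phases abut correctly: the terminal policy guaranteed by each lemma must coincide with the initial policy hypothesized by the next, so that no iteration is double-counted or lost at a seam. I would verify that the chain $00\ldots00 \to 00\ldots10 \to 00\ldots1,k-2 \to 00\ldots0,k-2 \to 00\ldots01$ matches the hypotheses and conclusions of Lemmas~\ref{lem:F_1}, \ref{lem:F_2}, \ref{lem:F_3}, \ref{lem:F_4} in turn. Because each lemma already packages its own sink-reduction argument, no fresh $Q$-value computation is needed here and the recurrence follows purely by concatenation. The structural contrast with~\eqref{eq:NC} is the extra $(k-3)$ term: in $\mathcal{F}$ the final descent at state $1$ must still cycle through the remaining intermediate actions (Lemma~\ref{lem:F_4}), whereas in $\mathcal{F}_C$ that phase is absent.
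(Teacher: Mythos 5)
Your proposal is correct and follows exactly the paper's own argument: the paper proves this theorem by sequentially applying Lemmas~\ref{lem:F_1}--\ref{lem:F_4}, which is precisely your four-phase decomposition with the phase lengths $\mathcal{N}(n,k)$, $2$, $\mathcal{N}_C(n,k)$, and $k-3$. Your explicit check that each lemma's terminal policy matches the next lemma's initial policy is a welcome elaboration of the seam-matching the paper leaves implicit.
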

\begin{proof}
This can be be proven by sequentially applying Lemmas~\ref{lem:F_1}-\ref{lem:F_4}.
\end{proof}

\begin{thm}[Recursive Relation]\label{thm:recursive}
$$\mathcal{N}(n+1,k) = 2\times\mathcal{N}(n,k)+2$$
\end{thm}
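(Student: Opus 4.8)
The plan is to eliminate the complementary count $\mathcal{N}_C$ between the two coupled recursions that have just been established, thereby collapsing the pair of interlocked recurrences into a single self-contained one. The key observation is that Equations~\eqref{eq:NC} and~\eqref{eq:N} have \emph{identical} right-hand sides except for the extra additive term $(k-3)$ appearing in~\eqref{eq:N}. So the first step is simply to subtract \eqref{eq:NC} from \eqref{eq:N}, which cancels the common block $\mathcal{N}(n,k)+2+\mathcal{N}_C(n,k)$ and leaves
\begin{equation*}
    \mathcal{N}(n+1,k) - \mathcal{N}_C(n+1,k) = k-3 .
\end{equation*}

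This says that $\mathcal{N}$ and $\mathcal{N}_C$ differ by the \emph{constant} offset $k-3$ at every index to which the recursions apply; that is, $\mathcal{N}_C(m,k) = \mathcal{N}(m,k) - (k-3)$. Before using this, I would pin down the range of validity and confirm it at the smallest index needed: using the baseline $\mathcal{N}(2,k)=k+1$ from Lemma~\ref{lem:baseline} (and the corresponding value of $\mathcal{N}_C(2,k)$), I would check that the offset identity already holds at the base index, so that it is available at index $n$ whenever \eqref{eq:N} is invoked at $n$. Once the constant-difference identity $\mathcal{N}_C(n,k)=\mathcal{N}(n,k)-(k-3)$ is in hand, the final step is purely substitutional: plugging it into \eqref{eq:N} gives
\begin{equation*}
    \mathcal{N}(n+1,k) = \mathcal{N}(n,k) + 2 + \bigl[\mathcal{N}(n,k)-(k-3)\bigr] + (k-3) = 2\,\mathcal{N}(n,k) + 2,
\end{equation*}
where the explicit $(k-3)$ of \eqref{eq:N} cancels exactly against the $-(k-3)$ carried by the offset, which is the whole point of the construction of the complementary family $\mathcal{F}_C$.

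I do not expect a genuine obstacle here: all of the substantive content — the reductions to smaller instances and the careful tracking of switch sequences — has already been discharged in Lemmas~7.2 through~7.10 and folded into the two recurrences \eqref{eq:NC} and \eqref{eq:N}. The only point requiring care is bookkeeping of the index range, namely verifying that the constant-offset identity is valid precisely at the index $n$ at which \eqref{eq:N} is applied, so that no boundary case is silently assumed; this is a minor base-case verification rather than a real difficulty.
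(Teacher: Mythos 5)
Your proof is correct and takes essentially the same route as the paper's own: subtract \eqref{eq:NC} from \eqref{eq:N} to obtain the constant offset $\mathcal{N}(n+1,k)-\mathcal{N}_C(n+1,k)=k-3$, shift the index, and substitute back into \eqref{eq:N}. Your added remark about verifying the offset identity at the base index $n=2$ (where \eqref{eq:N} is first invoked) is a legitimate bookkeeping point that the paper passes over silently, but it does not change the argument.
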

\begin{proof}
Subtracting eq.(\ref{eq:NC}) from eq.(\ref{eq:N}), we get the relation:
\begin{equation*}\label{eq:diffN}
    \mathcal{N}(n+1,k) - \mathcal{N}_C(n+1,k) = (k-3)
\end{equation*}
As the RHS of the above is independent of $n$, we can replace $n+1$ with $n$ to get
\begin{equation*}\label{eq:diffN}
    \mathcal{N}_C(n,k) = \mathcal{N}(n,k) - (k-3)
\end{equation*}
Substituting $\mathcal{N}_C(n,k)$ from the above equation into eq.(\ref{eq:N}) completes the proof.
\end{proof}

\begin{thm}
\begin{equation*}
 \mathcal{N}(n,k) = (3+k)2^{n-2} -2   
\end{equation*}
\end{thm}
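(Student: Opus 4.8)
The plan is to solve the linear recurrence supplied by Theorem~\ref{thm:recursive} by induction on $n$, anchored at the baseline of Lemma~\ref{lem:baseline}. Since Theorem~\ref{thm:recursive} already delivers the clean first-order recurrence $\mathcal{N}(n+1,k) = 2\mathcal{N}(n,k) + 2$, and Lemma~\ref{lem:baseline} pins down $\mathcal{N}(2,k) = k+1$, essentially all of the combinatorial work is already complete; what remains is merely to verify that the closed form $(3+k)2^{n-2}-2$ is the unique solution of this initial-value problem.

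First I would check the base case $n=2$: substituting into the claimed formula gives $(3+k)2^{0}-2 = (3+k)-2 = k+1$, which agrees exactly with Lemma~\ref{lem:baseline}. Next, assuming the formula holds for some $n \geq 2$, I would apply the recurrence directly:
\begin{align*}
\mathcal{N}(n+1,k) &= 2\mathcal{N}(n,k) + 2 \\
&= 2\big((3+k)2^{n-2}-2\big) + 2 \\
&= (3+k)2^{n-1} - 4 + 2 \\
&= (3+k)2^{(n+1)-2} - 2,
\end{align*}
which is precisely the claimed formula with $n$ replaced by $n+1$. By the principle of mathematical induction the formula then holds for all $n \geq 2$.

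Alternatively, to avoid guessing the closed form, one could posit an ansatz $\mathcal{N}(n,k) = c_k 2^{n} + d$, since the recurrence is homogeneous-with-constant: matching the constant term forces $d = -2$, and matching the base value at $n=2$ forces $c_k 2^{2} - 2 = k+1$, i.e. $c_k = (3+k)/4 = (3+k)2^{-2}$, which reproduces $(3+k)2^{n-2}-2$. I expect no genuine obstacle at this final stage. The real difficulty of the whole development lived in establishing the coupled recurrences of equations~(\ref{eq:NC}) and~(\ref{eq:N}) and then eliminating $\mathcal{N}_C$ to reach Theorem~\ref{thm:recursive}, together with the careful bookkeeping of the base case in Lemma~\ref{lem:baseline}. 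Once those are in hand, this theorem is a routine resolution of a first-order linear recurrence, and the only point demanding care is that the exponent convention $2^{n-2}$ is aligned with the base index $n=2$ (rather than $n=0$), so that the base case and the inductive step agree.
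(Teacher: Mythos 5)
Your proof is correct and follows essentially the same route as the paper: both anchor at the baseline $\mathcal{N}(2,k)=k+1$ from Lemma~\ref{lem:baseline} and solve the first-order recurrence of Theorem~\ref{thm:recursive}, the only cosmetic difference being that you verify the closed form by induction while the paper unrolls the recurrence to the explicit sum $2^{n-2}(k+1)+\sum_{i=1}^{n-2}2^i$. Your base-case check and inductive step are both accurate, so nothing further is needed.
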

\begin{proof}
For a fixed k, use the baseline from lemma~\ref{lem:baseline} and apply the recursive relation described by theorem~\ref{thm:recursive} to complete the proof.
\begin{align*}
    \mathcal{N}(2,k) &= k+1\\
    \mathcal{N}(3,k) &= 2(k+1)+2\\
    \mathcal{N}(4,k) &= 2^2(k+1)+2^2+2\\
    &\;\;\vdots \notag \\
    \mathcal{N}(n,k) &= 2^{n-2}(k+1)+\sum_{i=1}^{n-2}2^i\\
    &= (3+k)2^{n-2} -2\\
    &= (3+k)2^{N/2-3} -2
\end{align*}
\end{proof}

\section{\textbf{Conclusion}}
In this work, we established a generalized lower bound on the number of iterations for a $N$ state, $k$ action MDP. We demonstrated the MDP formulation and proved a lower bound of $\bigO\big((3+k)2^{N/2-3}\big)$. However, we do not reject the existence of an MDP with a tighter lower bound, say, $\bigO\big(k^{N}\big)$. Out of all of family of MDP that we constructed and verified, most of them were $\bigO\big(2^{N} + k\big)$ and a few were $\bigO\big(k.{2}^{N}\big)$. Considering the switching rule employed by our construction, and the family of other MDPs we tested, finding an MDP with a tighter lower bound would be an interesting extension to our work.

\section{\textbf{Additional Result}}
We observed that the pattern used to define multiple actions was not being followed in very last states, which had scope of improvement. A simple modification to the actions from the final state improve the baseline from $$\mathcal{N}(2,k) = k+1  \quad \forall k\geq 3$$ to $$\mathcal{N}(2,k) = 2k  \quad \forall k\geq 3$$
which futher increases the lower bound from $\bigO\big((3+k)2^{N/2-3}\big)$ to $\bigO\big((1+k)2^{N/2-2}\big)$ which was confirmed experimentally. Since the change is only at the final state, we believe that the rest of the proof will remain the same. The variation in the MDP has been shown in the Appendix Fig.\ref{fig:mdp_new}. $k1,k2$ are actions with probability $p1,p2$ respectively and $k1>k2 \implies p1>p2$.

\nocite{*}

\bibliographystyle{unsrt}

\bibliography{references}

\newpage
\section{Appendix}

We present the simulation results for all actions and states upto ten in the table below. The experimental results are in coherence with the theoretical values derived and proved.

\begin{table}[htbp]
\caption{Number of Iterations for family of MDP}
\begin{tabular}{ |c|c|c|c|c|c|c|c|c|c| } 
 \hline
  & \textbf{n=2} & \textbf{n=3} & \textbf{n=4} & \textbf{n=5} & \textbf{n=6} & \textbf{n=7} & \textbf{n=8} & \textbf{n=9} & \textbf{n=10} \\ 
 \hline
  \textbf{k=3} & 4 & 10 & 22 & 46 & 94 & 190 & 382 & 766 & 1534 \\
    \textbf{k=4} & 5 & 12 & 26 & 54 & 110 & 222 & 446 & 894 & 1790 \\
      \textbf{k=5} & 6 & 14 & 30 & 62 & 126 & 254 & 510 & 1022 & 2046 \\
        \textbf{k=6} & 7 & 16 & 34 & 70 & 142 & 286 & 574 & 1150 & 2302 \\
          \textbf{k=7} & 8 & 18 & 38 & 78 & 158 & 318 & 638 & 1278 & 2558 \\
            \textbf{k=8} & 9 & 20 & 42 & 86 & 174 & 350 & 702 & 1406 & 2814 \\
              \textbf{k=9} & 10 & 22 & 46 & 94 & 190 & 382 & 766 & 1534 & 3070\\
                \textbf{k=10} & 11 & 24 & 50 & 102 & 206 & 414 & 830 & 1662 & 3326 \\
 \hline
\end{tabular}
    \label{tab:nk}
\end{table}
\begin{figure}[htbp]
    \includegraphics[width =\linewidth]{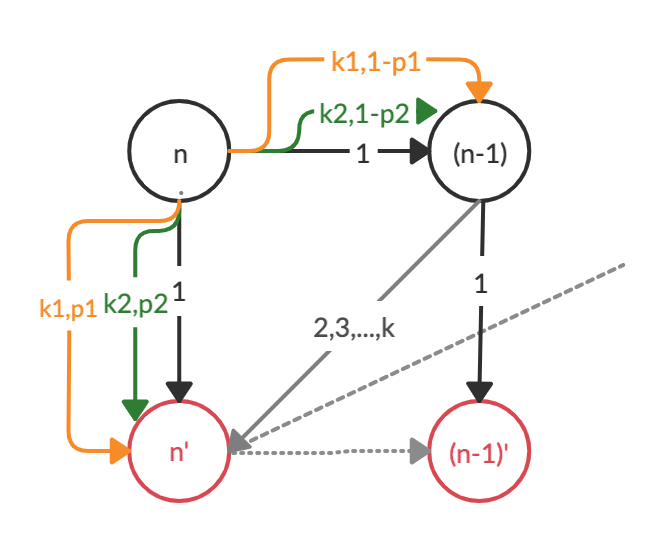}
    \caption{Improvement in MDP (Best viewed in color)}
    \label{fig:mdp_new}
\end{figure}

\end{document}